\def\set@curr@file#1{\def\@curr@file{#1}} 
\newtheorem{assumption}{Assumption}
\newtheorem*{assumption*}{Assumption}
\DeclareMathOperator*{\argmax}{arg\,max}
\DeclareMathOperator*{\argmin}{arg\,min}
\title[]{A Dynamical Systems Stability Approach for\\ Convergence of the Bayesian EM Algorithm}
\author{%
 \Name{Orlando Romero} \Email{oromero@seas.upenn.edu}\\
 \addr Department of Electrical and Systems Engineering, University of Pennsylvania, Philadelphia, PA, USA
 \AND
 \Name{Subhro Das} \Email{subhro.das@ibm.com}\\
 \addr MIT-IBM Watson AI Lab, IBM Research, Cambridge, MA, USA
 \AND\Name{Pin-Yu Chen} \Email{pin-yu.chen@ibm.com}\\
 \addr IBM Thomas J. Watson Research Center, IBM Research, Yorktown Heights, NY, USA
 \AND\Name{S\'{e}rgio Pequito} \Email{sergio.pequito@tudelft.nl}\\
 \addr Delft Center for Systems and Control, Delft University of Technology, Delft, The Netherlands
}
\begin{document}

\maketitle

\begin{abstract}%
Out of the recent advances in systems and control (S\&C)-based analysis of optimization algorithms, not enough work has been specifically dedicated to machine learning (ML) algorithms and its applications. This paper addresses this gap by illustrating how (discrete-time) Lyapunov stability theory can serve as a powerful tool to aid, or even lead, in the analysis (and potential design) of optimization algorithms that are not necessarily gradient-based. The particular ML problem that this paper focuses on is that of parameter estimation in an incomplete-data Bayesian framework via the popular optimization algorithm known as maximum a posteriori expectation-maximization (MAP-EM). Following first principles from dynamical systems stability theory, conditions for convergence of MAP-EM are developed. Furthermore, if additional assumptions are met, we show that fast convergence (linear or quadratic) is achieved, which could have been difficult to unveil without our adopted S\&C approach. The convergence guarantees in this paper effectively expand the set of sufficient conditions for EM applications, thereby demonstrating the potential of similar S\&C-based convergence analysis of other ML algorithms.
\end{abstract}

\begin{keywords}%
Optimization; optimization algorithms; dynamical systems; Lyapunov; stability; convergence; Expectation-Maximization; EM algorithm.
\end{keywords}

\section{Introduction}
This work builds upon~\citep{romero2019} and is inspired by recent papers importing ideas from (dynamical) systems and control (S\&C) theory into optimization, such as~\citep{Elia2011,Su2014,Lessard2016,Wibisono2016,Fazlyab2017,Scieur2017,Franca2018,Taylor2018,Wilson2018,Orvieto2019, Romero2020}. While a significant volume of these optimization-based papers have been published at machine learning (ML) venues, only a few have been explicitly dedicated to addressing concrete ML problems, applications, or algorithms~\citep{Plumbley1995,Pequito2011UnsupervisedLO,Zhu2018AnOC,Aquilanti2019,Liu2019}. Furthermore, only a small subset of this emerging topic of research has focused directly on discrete-time analysis that is the direct result of discretizations of an underlying continuous-time version of the algorithms~\citep{Lessard2016,Fazlyab2018,Fazlyab2018b,Taylor2018,Lessard2020}.

Lyapunov stability theory, extensively used to analyze the stability of nonlinear dynamical systems~\citep{Khalil2001}, is a particularly fruitful approach to import from S\&C into O\&ML. In general, Lyapunov functions may be seen as \emph{abstract} surrogates of \emph{energy} in a dynamical system. If such a function is sufficiently regular and non-increasing over time, then some form of stability must be present. Likewise, if persistently increasing, then instability is inevitable. However, in general, constructing a suitable can be a difficult endeavour. Fortunately, in the context of O\&ML, the cost function itself (if available) or other available performance metrics are often good candidates or starting points to constructing useful Lyapunov functions. This way, parallels between notions o \emph{stability} of dynamical systems and \emph{convergence} of machine learning algorithms can be made, particularly so for non-combinatorial optimization-based algorithms. To the best of the authors' knowledge, the current literature lacks a comprehensive summary of these relationships, with the closest work that we are aware being from~\cite{Schropp1995,Lessard2016,Fazlyab2018,Taylor2018}, and~\cite{Wilson2018}. 

In this paper, we conduct a S\&C-based analysis of the convergence of a widely popular algorithm used for incomplete-data estimation and unsupervised learning -- the expectation-maximization (EM) algorithm. More precisely, we focus on the Bayesian variant of the EM algorithm originally proposed by~\cite{Dempster1977}, which we refer to as the \emph{MAP-EM algorithm}, since it is used for maximum a posteriori (MAP) estimation~\citep{Figueiredo2004}. 
We leverage notions from discrete-time Lyapunov stability theory to study the convergence of MAP-EM, and, in the process, provide exclusive insights on the robustness of our derived conditions for stability (asymptotic or otherwise), and thus convergence guarantees. 
%
%

Compared to our preliminary work~\citep{romero2019}, the present paper now allows us to incorporate arbitrary prior information on the unknown parameters to be estimated, thus potentially accelerating convergence, or otherwise improving its quality. Furthermore, we now provide less restrictive conditions to check to ensure different forms of convergence, particularly so for exponential stability, and thus Q-linear convergence of the iterates of the EM algorithm.
With this, we argue for the possibility of extending our S\&C-based framework to discover robust stability conditions and novel convergence guarantees of alternative iterative optimization algorithms used in machine learning.
%
%
%
%

\section{Background: MAP-EM Algorithm}

Let $\theta$ be an unknown parameter of interest that we seek to infer from an idealized unobservable dataset $x$, via the statistical model $p(x|\theta)$ and prior $p(\theta)$. Given that $x$ is not directly observable, suppose another random variable $y$, which may be seen as an \emph{incomplete} version of~$x$, is observable. In this definition, $x = (y,z)$, with~$z$ seen as \mbox{\emph{missing data}}. For this reason, $x$ is typically referred to as the \emph{complete} dataset. More generally, we could have $x = g(y,z)$ for some $g(\cdot)$, with $y$ observable and $z$ hidden, or, simplistically $y = h(x)$ for some $h(\cdot)$. In practice, there could even be no explicit missing data or no relationship between $x$ and $y$ in terms of transformations. Instead the only relationship could be the Markov condition $\theta\to x\to y$~\citep{Gupta2011}, meaning that~$y$ is conditionally independent of~$\theta$ subject to~$x$, \emph{i.e.} $p(y|x,\theta) = p(y|x)$.

We adopt the notation that $x,y,\theta$ are all (absolutely) continuous random variables, but, in reality, only $\theta|y$ is required to be so, with $x$ and $y$ being allowed to be discrete, continuous, or of mixed type. For ease of notation, we use $\mathrm{d}x,\mathrm{d}y,\mathrm{d}\theta$ to refer, respectively, to integration with respect to (w.r.t.) implicit $\sigma$-finite measures that dominate the probability distributions of $x,y,\theta$, or appropriate conditionals of these, and which coincide with the construction of the respective densities via Radon-Nikodym derivatives.
We aim to estimate $\theta$ from $y$ via the \emph{maximum a posteriori} (MAP) estimator: $\hat{\theta}_\textnormal{MAP} \triangleq \argmax_\theta\, \log p(\theta|y)$,
where the maximization is taken over the entire parameter space. In some situations a global maximizer may not exist, and we need to be content with (or even give preference to) a ``good'' local maximizer~\citep{Figueiredo2004}. The mapping $\theta\mapsto \log p(\theta|y)$ is  typically referred to as the \emph{incomplete-data} log-likelihood function, whereas \mbox{$\theta\mapsto \log p(\theta|x)$} the \emph{complete-data} log-likelihood function. 

Under the incomplete-data framework, a \mbox{popular} approach to compute the MAP estimator is through the \emph{expectation-maximization} (EM) algorithm (also known as MAP-EM in our Bayesian framework), whose iterations are
\begin{equation}
    \label{eq:EMalg}
    \hat{\theta}_{k+1} = \argmax_\theta \,Q(\theta,\hat{\theta}_k), \quad\quad (k\in\mathbb{Z}_+)
\end{equation}
from a given initial estimate $\hat{\theta}_0$, where $Q(\theta,\hat{\theta}) \triangleq \mathbb{E}_{x\sim p(x|y,\hat{\theta})}[\log p(\theta|x)]$ denotes the expected complete-data log-posterior, conditional to the observed data $y$ and current parameter estimate~$\hat{\theta}$. The maximum in~\eqref{eq:EMalg} is taken within the \emph{entire} parameter space. Eventually, as we will demonstrate later, EM is (in general) a \emph{local} (greedy) search method w.r.t. the actual objective function -- the (incomplete-data) log-posterior
\footnote{\footnotesize The underlying assumption of the EM algorithm is that~\eqref{eq:EMalg} is easy to \emph{globally} maximize in $\theta$. If it can't be exactly and globally maximized, but instead if we can find some~$\hat{\theta}_{k+1}$ such that $Q(\hat{\theta}_{k+1},\hat{\theta}_k) > Q(\hat{\theta}_k,\hat{\theta}_k), \forall \theta,$ (potentially excluding the case when~$\hat{\theta}_k$ is already fixed point of~EM), then any variant of EM that settles for such a sequence~$\{\hat{\theta}_k\}_{k\in\mathbb{Z}_+}$ of iterates, where $\mathbb{Z}_+ = \{0,1,2,\ldots\}$, is known as a \emph{generalized} EM (GEM) algorithm.}. 

\subsection{An Information-Theoretic Perspective}
\label{subsec:info_theory}
As we dive into the information theoretic perspective, recall that $\mathcal{D}_\mathrm{KL}(p\|q) \triangleq \int_\mathcal{X} p(x)\log\left(\frac{p(x)}{q(x)}\right)\mathrm{d}x$
denotes the Kullback-Leibler (KL) divergence between the probability density functions $p(x)$ and $q(x)$ w.r.t. the same $\sigma$-finite dominating measure over~$\mathcal{X}$ implicitly denoted via~$\mathrm{d}x$. The (random) variables of interest satisfies the following  assumption
\begin{assumption}
The random variables~$x,y,\theta$ satisfy the Markovian condition $\theta\to x\to y$.
\label{ass:Markov}
\end{assumption}
Now, the Markov Assumption~\ref{ass:Markov}, \emph{i.e.} $p(y|x,\theta) = p(y|x)$, allows us to restate the $Q$-function and subsequently the EM algorithm in information-theoretic terms, leading to the following proposition.
\begin{proposition}
Under Assumption~\ref{ass:Markov}, we have
\begin{equation}
\begin{split}
    Q(\theta,\hat{\theta}) = \log p(\theta|y) - \mathcal{D}_\mathrm{KL}[p_X(\cdot|y,\hat{\theta})\,\|\, p_X(\cdot|y,\theta)] +\textnormal{terms that do not depend on } \theta,
\end{split}
\label{eq:Qtemp}
\end{equation}
and, thus, the MAP-EM algorithm~\eqref{eq:EMalg} can be rewritten as
\begin{equation}
    \hat{\theta}_{k+1} = \argmax_\theta\,\{\log p(\theta|y)- d(\theta,\hat{\theta}_k)\}, 
\label{eq:EMnew}
\end{equation}
where $d(\theta,\hat{\theta}) \triangleq \mathcal{D}_{\mathrm{KL}}[p_X(\cdot|y,\hat{\theta})\,\|\,p_X(\cdot|y,\theta)]$ and $p_X(x|y,\theta) \triangleq p(x|y,\theta)$.
\label{prop:infoEM}
\end{proposition}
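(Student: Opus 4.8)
The plan is to unfold the definition $Q(\theta,\hat{\theta}) = \mathbb{E}_{x\sim p(x|y,\hat{\theta})}[\log p(\theta|x)]$ by first rewriting the complete-data log-posterior $\log p(\theta|x)$ so as to isolate the incomplete-data log-posterior $\log p(\theta|y)$, a term of the shape $\log p(x|y,\theta)$, and quantities free of $\theta$. The bridge between $p(\theta|x)$ and $p(\theta|y)$ is Bayes' rule together with the Markov Assumption~\ref{ass:Markov}: factoring the conditional joint $p(x,y|\theta)$ in two ways gives $p(y|x,\theta)\,p(x|\theta) = p(x|y,\theta)\,p(y|\theta)$, and since $p(y|x,\theta) = p(y|x)$ by Assumption~\ref{ass:Markov}, we get $p(x|\theta) = p(x|y,\theta)\,p(y|\theta)/p(y|x)$. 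Substituting this into $p(\theta|x) = p(x|\theta)\,p(\theta)/p(x)$ and using $p(y|\theta)\,p(\theta) = p(\theta|y)\,p(y)$ yields the pointwise identity
\[
\log p(\theta|x) = \log p(\theta|y) + \log p(x|y,\theta) + \log p(y) - \log p(y|x) - \log p(x).
\]

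Next I would apply the conditional expectation $\mathbb{E}_{x\sim p(x|y,\hat{\theta})}[\,\cdot\,]$ to both sides. The term $\log p(\theta|y)$ is constant in $x$ and passes through unchanged; the terms $\log p(y)$, $\mathbb{E}_{x\sim p(x|y,\hat{\theta})}[\log p(y|x)]$ and $\mathbb{E}_{x\sim p(x|y,\hat{\theta})}[\log p(x)]$ depend only on $y$ and $\hat{\theta}$, hence fold into the ``terms that do not depend on $\theta$''. What remains is to identify $\mathbb{E}_{x\sim p(x|y,\hat{\theta})}[\log p(x|y,\theta)]$: adding and subtracting $\mathbb{E}_{x\sim p(x|y,\hat{\theta})}[\log p(x|y,\hat{\theta})]$ rewrites it as $-\mathcal{D}_{\mathrm{KL}}[p_X(\cdot|y,\hat{\theta})\,\|\,p_X(\cdot|y,\theta)]$ plus the (differential) negative entropy of $p_X(\cdot|y,\hat{\theta})$, and the latter is again independent of $\theta$. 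Collecting terms gives exactly~\eqref{eq:Qtemp}; since the $\theta$-independent remainder cannot change the location of the maximum, substituting~\eqref{eq:Qtemp} into~\eqref{eq:EMalg} immediately produces~\eqref{eq:EMnew}, with $d(\theta,\hat{\theta})$ as defined there.

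I expect the difficulty here to be bookkeeping rather than conceptual depth. Two points deserve care: first, $p(x|y,\theta)$ genuinely retains its dependence on $\theta$ --- the Markov chain runs $\theta\to x\to y$, not $\theta\to y\to x$, so one must resist collapsing $p(x|y,\theta)$ to $p(x|y)$; second, all the divisions above (by $p(y|x)$, $p(x)$, and $p(y)$) must be justified on the relevant supports, which is precisely where the implicit $\sigma$-finite dominating-measure conventions fixed before the statement do their job. It is also worth noting explicitly that $\mathcal{D}_{\mathrm{KL}}[p_X(\cdot|y,\hat{\theta})\,\|\,p_X(\cdot|y,\theta)]$ is finite exactly when $p_X(\cdot|y,\hat{\theta})$ is absolutely continuous w.r.t.\ $p_X(\cdot|y,\theta)$; otherwise $Q(\theta,\hat{\theta}) = -\infty$ and such $\theta$ are irrelevant to the argmax, so the reformulation~\eqref{eq:EMnew} is unaffected.
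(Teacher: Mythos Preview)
Your proposal is correct and follows essentially the same route as the paper: derive the pointwise identity $\log p(\theta|x) = \log p(\theta|y) + \log p(x|y,\theta) + (\text{terms independent of }\theta)$ from Bayes' rule and Assumption~\ref{ass:Markov}, then take the expectation under $p(x|y,\hat\theta)$ and recognize the KL divergence. The paper packages the $\theta$-free terms a bit more compactly as $-\log p(x|y)$ rather than $\log p(y) - \log p(y|x) - \log p(x)$, but the argument is identical, and your extra remarks about supports and absolute continuity are sound refinements.
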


Since the sole purpose of the $Q$-function is to be maximized at the M-step, we thus redefine it as $Q(\theta,\hat{\theta}) \triangleq  \log p(\theta|y) - d(\theta,\hat{\theta})$ by dropping the terms that do not depend on~$\theta$. Notice that the EM algorithm can be recognized as a (generalized) proximal point algorithm (PPA)
\begin{equation}
\label{eq:EMcompact}
    \hat{\theta}_k = \argmin_\theta\,\{\ell(\theta) + \beta_k\,d(\theta,\hat{\theta})\},
\end{equation}
with a fixed step size $\beta_k = 1$, where the function we seek to minimize~$\ell(\theta) \triangleq -\log p(\theta|y)$. Furthermore, $d(\cdot,\cdot)$ may be seen as a regularizing premetric\footnote{\footnotesize It is well known that $d(\theta,\hat{\theta}) \geq 0$, with equality corresponding almost exclusively to $\theta=\hat{\theta}$. More precisely, $\mathcal{D}_\mathrm{KL}(p\|q) \geq 0$, with equality if and only if $p=q$ $P$-a.s., where $P$ is the (unique) probability measure with density (Radon–Nikodym derivative) $p$ w.r.t. the $\sigma$-finite measure implicit in $\mathrm{d}x$. Despite this, the KL divergence is non-symmetric and does not satisfy the triangle inequality, in general, and thus it is not a metric.}. This perspective was explored in detail by~\cite{Chretien2000} and~\cite{Figueiredo2004}, which served as inspiration for this work.

Notice also that, if the prior $p(\theta)$ is ``flat'', meaning that we assign the (typically) non-informative (and potentially degenerate) prior given by a uniform distribution over the parameter space, then \mbox{$\ell(\theta)\propto -\log p(y|\theta)$}, with $\theta\mapsto \log p(y|\theta)$ naturally denoting the incomplete-data log-likelihood function. Therefore, the Bayesian EM algorithm (MAP-EM) generalizes the traditional EM algorithm, and for this reason, from this point on we will largely refer to MAP-EM as simply EM.

\section{The Dynamical Systems Approach}
We now interpret the EM algorithm as a (discrete-time and time-invariant) nonlinear state-space dynamical system,
\begin{equation}
    \label{eq:dynamicalsystem}
    \hat{\theta}_{k+1} = F(\hat{\theta}_k), \quad\quad (k\in\mathbb{Z}_+)
\end{equation}
with $ F(\hat{\theta}) \triangleq  \argmax_\theta Q(\theta,\hat{\theta})$. In the language of dynamical systems, $\hat{\theta}_k$ is known as the \emph{state} of the system~\eqref{eq:dynamicalsystem} at time step $k$ (usually denoted as $x_k$, much like in reinforcement learning, but for the sake of notational consistency with EM we opted for $\hat{\theta}_k)$, and, in particular, $\hat{\theta}_0$ is called the \emph{initial state}. The space of points from which the state can take values is known as the \emph{state space}. In our case, the parameter space and the state space coincide. The sequence $\{\hat{\theta}_0,\hat{\theta}_1,\ldots\}$ is often called a \emph{trajectory} starting from initial state $\hat{\theta}_0$. The function $F$ represents the \emph{dynamics} of the system. The following assumption ensures that $F(\hat{\theta})$ is uniquely defined.
\begin{assumption}
$Q(\cdot,\hat{\theta})$ has a unique global maximizer for each $\hat{\theta}$.
\label{ass:Quniquemaximizer}
\end{assumption}
In other words, the \emph{complete}-data log-posterior is expected to have a unique global maximizer, which in principle does not prevent the \emph{incomplete}-data log-posterior from having multiple global maxima or being unbounded, such as in the cases of GMMs with unknown covariance matrices. 

\subsection{Equilibrium in Dynamical Systems}

We say that a point $\hat{\theta}^\star$ in the state space is an~\emph{equilibrium} of the dynamical system~\eqref{eq:dynamicalsystem} if $\hat{\theta}_0 = \hat{\theta}^\star$ implies that $\hat{\theta}_k = \hat{\theta}^\star$ for every $k\in\mathbb{Z}_+$. In other words, $\hat{\theta}^\star$ is an equilibrium point if and only if $\hat{\theta}^\star$ is a \emph{fixed point} of $F$. This implies that, $F(\hat{\theta}^\star) = \hat{\theta}^\star$, the equilibria of the dynamical system representation of EM naturally coincide with its fixed points.

Recall that \emph{limit points} of the EM algorithm~\eqref{eq:EMalg} consist of points $\hat{\theta}^\star$ for which there exists some $\hat{\theta}_0$ such that \mbox{$\hat{\theta}_k \to \hat{\theta}^\star$} as $k\to\infty$. However, EM need not be \emph{locally convergent} near limit points of its dynamical system representation, which requires that $\hat{\theta}_k \to \hat{\theta}^\star$ as $k\to\infty$ for every~$\hat{\theta}_0$ in a small enough neighborhood of~$\hat{\theta}^\star$. Such points are known as \emph{(locally) attractive} in dynamical systems theory.
In order to establish a key relationship between limit points of EM and equilibria of its dynamical systems representation, $F$ needs to be continuous as stated in Assumption~\ref{ass:Qcontinuous}.
\begin{assumption}
The $Q$-function $Q(\cdot)$ is continuous in both arguments.
\label{ass:Qcontinuous}
\end{assumption}
This follows, for instance, if $\theta\mapsto p(\theta|y)$ is continuous and $x$ conditional to $y$ and $\theta$ has a finite support, which is the case for GMM clustering.
\begin{lemma}
Under Assumptions~\ref{ass:Markov}--\ref{ass:Qcontinuous}, $F$ is continuous.
\label{lemma:Fiscontinuous}
\end{lemma}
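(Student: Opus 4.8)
The plan is to prove sequential continuity of $F$ by a Berge-maximum-theorem-style argument, using Assumption~\ref{ass:Qcontinuous} to pass limits through the optimality inequalities and Assumption~\ref{ass:Quniquemaximizer} to pin down the limit. Fix $\hat{\theta}$ in the state space and let $\hat{\theta}_n \to \hat{\theta}$. Write $\theta^\star \triangleq F(\hat{\theta})$ and $\theta_n \triangleq F(\hat{\theta}_n)$. I want to show $\theta_n \to \theta^\star$. First I would argue, by contradiction, that it suffices to rule out the existence of a subsequence $\{\theta_{n_k}\}$ bounded away from $\theta^\star$. Assuming such a subsequence lies in a compact set (see the obstacle below), extract a further convergent subsequence $\theta_{n_k} \to \bar{\theta}$ with $\bar{\theta} \neq \theta^\star$.

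Next I would exploit optimality of each $\theta_{n_k}$ for $Q(\cdot,\hat{\theta}_{n_k})$: for every $\theta$ in the parameter space,
\begin{equation}
Q(\theta_{n_k},\hat{\theta}_{n_k}) \geq Q(\theta,\hat{\theta}_{n_k}).
\end{equation}
Since $\theta_{n_k}\to\bar{\theta}$ and $\hat{\theta}_{n_k}\to\hat{\theta}$, joint continuity of $Q$ (Assumption~\ref{ass:Qcontinuous}, which is meaningful precisely because Proposition~\ref{prop:infoEM} and Assumption~\ref{ass:Markov} give the explicit form $Q(\theta,\hat{\theta}) = \log p(\theta|y) - d(\theta,\hat{\theta})$) lets me pass to the limit on both sides, yielding $Q(\bar{\theta},\hat{\theta}) \geq Q(\theta,\hat{\theta})$ for all $\theta$. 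Hence $\bar{\theta}$ is a global maximizer of $Q(\cdot,\hat{\theta})$; by the uniqueness guaranteed in Assumption~\ref{ass:Quniquemaximizer}, $\bar{\theta} = \theta^\star$, contradicting $\bar{\theta}\neq\theta^\star$. Therefore no such subsequence exists, $\theta_n \to \theta^\star$, and $F$ is continuous.

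The main obstacle is the compactness step: the parameter space is not assumed compact (indeed the paper explicitly allows the incomplete-data log-posterior to be unbounded), so a priori the image sequence $\{F(\hat{\theta}_n)\}$ need not be precompact, and without that the maximum theorem can genuinely fail. I would address this by invoking a properness/coercivity property of $\theta\mapsto \ell(\theta)+d(\theta,\hat{\theta})$ that holds uniformly for $\hat{\theta}$ in a neighborhood of the limit point — e.g., uniformly bounded sublevel sets near $\hat{\theta}$ — so that the maximizers $\theta_n$ for $n$ large are confined to a fixed compact set; alternatively, if the relevant part of the state space is locally compact, one restricts attention to a compact neighborhood and runs the same argument there. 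Once this confinement is in place, the rest of the argument is the routine limit-passing above, and sequential continuity upgrades to continuity since the state space is (assumed) metrizable.
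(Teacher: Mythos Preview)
Your approach is essentially the paper's own: take a sequence $\hat{\theta}_n\to\hat{\theta}$, use joint continuity of $Q$ (Assumption~\ref{ass:Qcontinuous}) to pass the optimality inequalities $Q(F(\hat{\theta}_n),\hat{\theta}_n)\geq Q(\theta,\hat{\theta}_n)$ to the limit, and then invoke the uniqueness in Assumption~\ref{ass:Quniquemaximizer} to identify the limiting maximizer with $F(\hat{\theta})$. The paper packages this as a single chain of inequalities showing $Q(\lim_k F(\hat{\theta}_k),\hat{\theta}) = Q(F(\hat{\theta}),\hat{\theta})$ and hence $\lim_k F(\hat{\theta}_k)\in\argmax_\theta Q(\theta,\hat{\theta})=\{F(\hat{\theta})\}$, while you do the standard Berge-style subsequence extraction; conceptually these are the same argument.

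One point worth flagging: you explicitly isolate the precompactness of $\{F(\hat{\theta}_n)\}$ as an obstacle, whereas the paper's proof simply writes $\lim_{k\to\infty} F(\hat{\theta}_k)$ as if existence of the limit were automatic. So the gap you identify is real and is shared by the paper's proof; neither closes it from Assumptions~\ref{ass:Markov}--\ref{ass:Qcontinuous} alone. Your suggested fix (a uniform coercivity/properness of $\theta\mapsto -Q(\theta,\hat{\theta})$ for $\hat{\theta}$ near the limit point, giving confinement of the maximizers to a fixed compact set) is the standard remedy and is what the Berge maximum theorem requires in the noncompact setting.
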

Please refer to Appendix~\ref{sec:lemma2proof} for the proof of~Lemma~\ref{lemma:Fiscontinuous}. We now move ahead to establish a key relationship between limit points of EM and its dynamical systems representation.

\begin{proposition}
Under Assumptions~\ref{ass:Markov}--\ref{ass:Qcontinuous}, any limit point of the EM algorithm~\eqref{eq:EMalg} is also a fixed point of $F$ given by~\eqref{eq:dynamicalsystem}, and thus an equilibrium of EM's dynamical system representation~\eqref{eq:dynamicalsystem}.
\label{prop:limitpoint}
\end{proposition}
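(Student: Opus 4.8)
The plan is to exploit continuity of the update map $F$, established in Lemma~\ref{lemma:Fiscontinuous}, together with uniqueness of limits in the (metric) state space and the equivalence — recalled just before the statement — between fixed points of $F$ and equilibria of~\eqref{eq:dynamicalsystem}. Concretely, I would unpack the definition of a limit point: if $\hat{\theta}^\star$ is a limit point of the EM algorithm~\eqref{eq:EMalg}, then there exists an initial estimate $\hat{\theta}_0$ whose trajectory $\{\hat{\theta}_k\}_{k\in\mathbb{Z}_+}$ generated by $\hat{\theta}_{k+1}=F(\hat{\theta}_k)$ satisfies $\hat{\theta}_k\to\hat{\theta}^\star$ as $k\to\infty$. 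Fix this trajectory for the rest of the argument.

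Next I would apply Lemma~\ref{lemma:Fiscontinuous}: since $F$ is continuous and $\hat{\theta}_k\to\hat{\theta}^\star$, we get $F(\hat{\theta}_k)\to F(\hat{\theta}^\star)$. On the other hand, $F(\hat{\theta}_k)=\hat{\theta}_{k+1}$ by~\eqref{eq:dynamicalsystem}, and the shifted sequence $\{\hat{\theta}_{k+1}\}_{k\in\mathbb{Z}_+}$ is merely the tail of $\{\hat{\theta}_k\}_{k\in\mathbb{Z}_+}$, hence it also converges to $\hat{\theta}^\star$. By uniqueness of the limit, $F(\hat{\theta}^\star)=\hat{\theta}^\star$, i.e., $\hat{\theta}^\star$ is a fixed point of $F$; by the equivalence between fixed points and equilibria, it is therefore an equilibrium of EM's dynamical system representation~\eqref{eq:dynamicalsystem}.

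The argument is short because the genuine work is hidden in Lemma~\ref{lemma:Fiscontinuous} (continuity of $F$, which consumes Assumptions~\ref{ass:Markov}--\ref{ass:Qcontinuous}) and in Assumption~\ref{ass:Quniquemaximizer}, which makes $F$ single-valued so that ``uniqueness of the limit'' can be invoked without ambiguity. The only point warranting care is that $F$ must be well-defined and continuous \emph{at} the limit point $\hat{\theta}^\star$ itself, not merely along the trajectory — but Lemma~\ref{lemma:Fiscontinuous} delivers continuity on the entire state space, so no localization issue arises, and this is where I expect the ``obstacle'' to sit, having already been discharged upstream. I would also flag that the implication is one-directional: a fixed point of $F$ need not be a limit point of EM (a repelling fixed point would not be reached from nearby initializations), which is consistent with the ensuing distinction between limit points and (locally) attractive equilibria.
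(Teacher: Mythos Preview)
Your proposal is correct and mirrors the paper's own proof almost verbatim: the paper fixes a trajectory with $\hat{\theta}_k\to\hat{\theta}^\star$, invokes the continuity of $F$ from Lemma~\ref{lemma:Fiscontinuous} to write $F(\hat{\theta}^\star)=F(\lim_k\hat{\theta}_k)=\lim_k F(\hat{\theta}_k)=\lim_k\hat{\theta}_{k+1}=\hat{\theta}^\star$, and concludes. Your additional remarks (where the assumptions are consumed, the one-directionality of the implication) are accurate commentary but not part of the paper's brief argument.
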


The reciprocal is not true, however, which is known to occur for \emph{unstable} equilibria of nonlinear systems. Therefore, we now focus on another key concept in dynamical systems theory -- that of \emph{(Lyapunov) stability} -- and proceed to study its relationship with convergence of the EM algorithm. 

\subsection{Lyapunov Stability}
\label{subsec:lyapunovstability}

We say that $\hat{\theta}^\star$ in the parameter space is a \emph{stable} point of the system~\eqref{eq:dynamicalsystem} if the trajectory $\{\hat{\theta}_0,\hat{\theta}_1,\hat{\theta}_2,\ldots\}$ is arbitrarily close to $\hat{\theta}^\star$, provided that $\|\hat{\theta}_0 - \hat{\theta}^\star\|>0$ is sufficiently small. In means that, if and only, for any $\varepsilon > 0$, there exists some $\delta>0$ such that for every $\hat{\theta}_0$ satisfying \mbox{$\|\hat{\theta}_0 - \hat{\theta}^\star\| \leq \delta$}, we have $\|\hat{\theta}_k - \hat{\theta}^\star\| \leq \varepsilon$ for every $k\in\mathbb{Z}_+$. If,~in addition, there exists some $\delta>0$ small enough such that, for every $\hat{\theta}_0$ satisfying $\|\hat{\theta}_0 - \hat{\theta}^\star\| \leq \delta$, we have $\hat{\theta}_k\to\hat{\theta}^\star$ as $k\to\infty$, then we say~$\hat{\theta}^\star$ is a \emph{(locally) asymptotically} stable point of the system. In other words, asymptotically stable points are simply stable and attractive points of the system. If the attractiveness is global, meaning that $\delta>0$ can be made arbitrarily large, then we say that~$\hat{\theta}^\star$ is \emph{globally} asymptotically stable. Finally, if in addition there exist $\delta,c>0$ and $\rho\in (0,1)$ such that, for every $\hat{\theta}_0$ satisfying $\|\hat{\theta}_0 - \hat{\theta}^\star\| \leq \delta$, we have $\|\hat{\theta}_k - \hat{\theta}^\star\| \leq c\,\rho^k$ for every $k\in\mathbb{Z}_+$, then we say that~$\hat{\theta}^\star$ is a \emph{(locally) exponentially} stable point of the system. Global exponential stability holds if $\delta>0$ can be made arbitrarily large.

\begin{lemma}
Consider the dynamical system~\eqref{eq:dynamicalsystem} with an arbitrary continuous function $F$. Then, every stable point is also an equilibrium. 
\label{lemma:stableareequilibria}
\end{lemma}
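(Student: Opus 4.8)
The plan is to prove directly that a stable point $\hat{\theta}^\star$ must satisfy $F(\hat{\theta}^\star)=\hat{\theta}^\star$, since by the discussion preceding the lemma this is exactly the statement that $\hat{\theta}^\star$ is an equilibrium of~\eqref{eq:dynamicalsystem}. First I would fix an arbitrary $\varepsilon>0$ and apply the definition of stability to obtain a corresponding $\delta>0$ such that every trajectory with $\|\hat{\theta}_0-\hat{\theta}^\star\|\le\delta$ stays within $\varepsilon$ of $\hat{\theta}^\star$ for all $k\in\mathbb{Z}_+$. Specializing this to $k=1$ gives $\|F(\hat{\theta}_0)-\hat{\theta}^\star\|\le\varepsilon$ for every $\hat{\theta}_0$ in the ball of radius $\delta$ centered at $\hat{\theta}^\star$.

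The key step is to let $\hat{\theta}_0$ approach $\hat{\theta}^\star$ (along the punctured ball of radius $\delta$, if one insists on the ``$\|\hat{\theta}_0-\hat{\theta}^\star\|>0$'' reading of the definition) and pass to the limit using continuity of $F$: since $F(\hat{\theta}_0)\to F(\hat{\theta}^\star)$ and $u\mapsto\|u-\hat{\theta}^\star\|$ is continuous, the inequality $\|F(\hat{\theta}_0)-\hat{\theta}^\star\|\le\varepsilon$ survives in the limit, yielding $\|F(\hat{\theta}^\star)-\hat{\theta}^\star\|\le\varepsilon$. Finally, because $\varepsilon>0$ was arbitrary, $\|F(\hat{\theta}^\star)-\hat{\theta}^\star\|\le\varepsilon$ for all $\varepsilon>0$ forces $F(\hat{\theta}^\star)=\hat{\theta}^\star$, so $\hat{\theta}^\star$ is a fixed point of $F$ and hence an equilibrium.

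I do not expect a genuine obstacle in this argument; it is a short $\varepsilon$--$\delta$ manipulation. The only subtlety — and the reason the hypothesis asks for $F$ continuous — is the boundary case of whether the definition of stability quantifies over initial conditions strictly away from $\hat{\theta}^\star$: if $\hat{\theta}_0=\hat{\theta}^\star$ is admissible, then $\|F(\hat{\theta}^\star)-\hat{\theta}^\star\|\le\varepsilon$ is immediate and continuity is not even needed, whereas if it is not, the limiting argument above handles it cleanly. Either way the conclusion follows.
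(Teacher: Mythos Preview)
Your argument is correct and is essentially the same as the paper's: both invoke stability at $k=1$ to bound $\|F(\hat{\theta}_0)-\hat{\theta}^\star\|$ for $\hat{\theta}_0$ near $\hat{\theta}^\star$, then pass to the limit $\hat{\theta}_0\to\hat{\theta}^\star$ via continuity of $F$ to conclude $F(\hat{\theta}^\star)=\hat{\theta}^\star$. The only cosmetic difference is that the paper packages the two limits into a single sequence $\varepsilon^{(n)}\to 0$ with associated $\delta^{(n)}$ and initial points $\hat{\theta}_0^{(n)}\to\hat{\theta}^\star$, whereas you fix $\varepsilon$, pass to the limit in $\hat{\theta}_0$, and then let $\varepsilon\downarrow 0$ at the end.
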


In particular, we can now see that, under Assumptions~\ref{ass:Markov}--\ref{ass:Qcontinuous}, \emph{every stable point of the dynamical system that represents EM is also a fixed point of EM}. Furthermore, it should be clear that local maxima of the incomplete-data log-posterior (that happen to be asymptotically stable) must be locally convergent points of EM. On the other hand, exponentially stable local maxima lead EM to attain a $\rho$-linear convergence rate, where $\rho$ originates from the definition of exponential stability. In particular, if the cost function~$\ell(\hat{\theta})$ is \mbox{Lipschitz} continuous, then clearly $\ell(\hat{\theta}_k) - \ell(\hat{\theta}^\star) = \mathcal{O}(\rho^k)$. The the relationships between all the aforementioned concepts is summarized in Appendix~\ref{appendix:stability}.

To establish the different notions of stability, we use the ideas proposed by Lyapunov, which we summarize in Lemma~\ref{lemma:lyapunovtheorem} in what we refer to as the ``Lyapunov theorem'' (actually a collective of results). Before stating the Lyapunov theorem, let us introduce some convenient terminology \mbox{borrowed} from nonlinear systems theory.
We say that a function $V(\hat{\theta})$ is:
\begin{enumerate}[noitemsep,topsep=0pt]
    \item \emph{positive semidefinite} w.r.t.~$\hat{\theta}^\star$ if $V(\hat{\theta}^\star) = 0$ and $V(\hat{\theta}) \geq 0$ for $\hat{\theta}$ near $\hat{\theta}^\star$;
    \item \emph{positive definite} w.r.t~$\hat{\theta}^\star$ if $V(\hat{\theta}) \geq 0$ with equality if and only if $\hat{\theta} = \hat{\theta}$, for every~$\hat{\theta}$ near~$\hat{\theta}^\star$;
    \item \emph{negative semidefinite} (respectively, \emph{negative definite}) if $-V$ is positive semidefinite (respectively, positive definite);
    \item \emph{radially unbounded} (or \emph{coercive}) if $V$ has domain $\mathbb{R}^p$ and $V(\hat{\theta})\to +\infty$ as $\|\hat{\theta}\|\to\infty$.
\end{enumerate}
Definitions 1--3 are local, and global reciprocals hold if $\hat{\theta}$ can be picked anywhere in the state space. Finally, we say that a scalar function $\alpha:[0,\infty)\to [0,\infty)$ is of \emph{class}~$\mathcal{K}$ if $\alpha(0)=0$ and if it is continuous and strictly increasing.

We are now ready to state Lyapunov's theorem for generic discrete-time systems of the form~\eqref{eq:dynamicalsystem}. These results, and Lyapunov-based results alike, are of crucial importance in practice for showing the different forms of stability of nonlinear systems.


\begin{lemma}[Lyapunov theorem]
Consider the dynamical system~\eqref{eq:dynamicalsystem} with an arbitrary continuous function $F$, and let $\hat{\theta}^\star$ be an arbitrary point in the state space. Let $V(\hat{\theta})$ be a continuous function and $\Delta V \triangleq V\circ F - V$. Consider the following assertions about these functions:
\begin{enumerate}[noitemsep,topsep=0pt]
    \item $V$ is positive definite w.r.t.~$\hat{\theta}^\star$;
    \item $\Delta V$ is negative semidefinite w.r.t.~$\hat{\theta}^\star$;
    \item $\Delta V$ is negative definite w.r.t.~$\hat{\theta}^\star$;
    \item $V$ and $-\Delta V$ are both globally positive definite w.r.t.~$\hat{\theta}^\star$ and $V$ is radially unbounded;
    \item There exist class-$\mathcal{K}$ functions $\alpha_1,\alpha_2,\alpha_3$ such that $\alpha_2(s) \leq \alpha_1(\rho\, s) + \alpha_3(s)$ holds near $s=0$, for some $\rho\in (0,1)$, and, for every $\hat{\theta}$ near $\hat{\theta}^\star$:
    \begin{subequations}
    \label{eq:DeltaVexp}
    \begin{align}
        \alpha_1(\|\hat{\theta}-\hat{\theta}^\star\|) \leq V(\hat{\theta}) &\leq \alpha_2(\|\hat{\theta}-\hat{\theta}^\star\|)\\
        \Delta V(\hat{\theta}) &\leq -\alpha_3(\|\hat{\theta}-\hat{\theta}^\star\|).
    \end{align}
    \end{subequations}
\end{enumerate}
\vskip-10pt
Then,~$\hat{\theta}^\star$ is
\begin{itemize}[noitemsep,topsep=0pt]
    \item stable if 1 and 2 hold;
    \item asymptotically stable if 1 and 3 hold;
    \item globally asymptotically stable if 4 holds;
    \item exponentially stable if 5 holds.
\end{itemize}
\label{lemma:lyapunovtheorem}
\end{lemma}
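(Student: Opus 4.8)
The plan is to prove the four implications one at a time, in each case exploiting that $F$ is continuous (so trajectories are well defined and bounded trajectories have limit points in the state space) together with the fact --- which I would record first --- that the hypotheses force $\hat\theta^\star$ to be an equilibrium: in each listed case one has $\Delta V(\hat\theta^\star) = 0$ and $V(\hat\theta^\star) = 0$, whence $V(F(\hat\theta^\star)) = 0$ and, by positive definiteness of $V$, $F(\hat\theta^\star) = \hat\theta^\star$. Throughout, fix a radius $r_0 > 0$ small enough that all the ``near $\hat\theta^\star$'' hypotheses are in force on the closed ball $\bar B_{r_0}(\hat\theta^\star)$.

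\textbf{Stability (1 and 2).} Given $\varepsilon \in (0, r_0)$, let $m := \min_{\|\hat\theta - \hat\theta^\star\| = \varepsilon} V(\hat\theta)$, which is positive since $V$ is positive definite and the sphere is compact. Fix $c \in (0, m)$ and let $\Omega$ be the connected component of the sublevel set $\{\,\hat\theta : V(\hat\theta) \le c\,\}$ containing $\hat\theta^\star$. A connectedness argument shows $\Omega \subseteq B_\varepsilon(\hat\theta^\star)$ (it cannot meet the sphere of radius $\varepsilon$); and $\Omega$ is forward invariant, since on $\Omega \subseteq \bar B_{r_0}$ we have $\Delta V \le 0$, so $V \circ F \le V \le c$ on $\Omega$ and $F(\Omega)$ is a connected subset of $\{V \le c\}$ through $\hat\theta^\star$. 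As $\{V < c\}$ is open and contains $\hat\theta^\star$, there is $\delta > 0$ with $\bar B_\delta(\hat\theta^\star) \subseteq \Omega$; then $\|\hat\theta_0 - \hat\theta^\star\| \le \delta$ implies $\hat\theta_k \in \Omega \subseteq B_\varepsilon$ for all $k$.

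\textbf{Asymptotic stability (1 and 3).} Stability follows from the previous item. For attractivity, take $\delta$ as above, so the trajectory stays in the compact set $K := \bar B_\varepsilon(\hat\theta^\star)$; the sequence $V(\hat\theta_k)$ is nonincreasing and bounded below, hence converges to some $c^\star \ge 0$. If $c^\star > 0$, the trajectory remains in the compact ``annulus'' $A := \{\,\hat\theta \in K : V(\hat\theta) \ge c^\star\,\}$, which excludes $\hat\theta^\star$; by negative definiteness and continuity, $\gamma := -\max_A \Delta V > 0$, so $V(\hat\theta_{k+1}) \le V(\hat\theta_k) - \gamma$ and $V(\hat\theta_k) \to -\infty$, a contradiction. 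Hence $c^\star = 0$; any limit point $\bar\theta$ of the (bounded) trajectory satisfies $V(\bar\theta) = 0$, so $\bar\theta = \hat\theta^\star$, and therefore $\hat\theta_k \to \hat\theta^\star$.

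\textbf{Global asymptotic stability (4) and exponential stability (5).} In case 4, global positive definiteness of $V$ and $-\Delta V$ imply the local hypotheses of the asymptotic-stability case, so $\hat\theta^\star$ is stable; radial unboundedness makes every sublevel set $\{V \le c\}$ compact, so any trajectory from any $\hat\theta_0$ stays in $\{V \le V(\hat\theta_0)\}$, and the annulus argument applies verbatim there, yielding global attractivity. In case 5, note first that the bounds $\alpha_1(\|\cdot\|) \le V \le \alpha_2(\|\cdot\|)$ and $\Delta V \le -\alpha_3(\|\cdot\|)$ already force $V$ positive definite and $\Delta V$ negative semidefinite near $\hat\theta^\star$, so stability holds and the trajectory stays in $\bar B_\delta(\hat\theta^\star)$ for $\delta$ small. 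Writing $s_k := \|\hat\theta_k - \hat\theta^\star\|$, I would then chain $V(\hat\theta_{k+1}) = V(\hat\theta_k) + \Delta V(\hat\theta_k) \le \alpha_2(s_k) - \alpha_3(s_k) \le \alpha_1(\rho s_k)$, combine with $\alpha_1(s_{k+1}) \le V(\hat\theta_{k+1})$, and use strict monotonicity of the class-$\mathcal{K}$ function $\alpha_1$ to get $s_{k+1} \le \rho s_k$; iterating gives $s_k \le \rho^k s_0 \le \rho^k \delta$, i.e.\ exponential stability with $c = s_0$. I expect the main obstacle to be the bookkeeping of these localizations --- repeatedly shrinking the working neighborhood so that the ``near $\hat\theta^\star$'' hypotheses apply and so that the relevant sublevel sets are compact, connected, and forward invariant --- with the only genuinely non-routine ingredient being the LaSalle-type step in the asymptotic case, where one confines the trajectory to a compact set avoiding $\hat\theta^\star$ and extracts a uniform strict decrease of $V$.
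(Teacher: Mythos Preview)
Your proposal is correct. For the exponential-stability case your chain
\[
\alpha_1(s_{k+1}) \le V(\hat\theta_{k+1}) = V(\hat\theta_k) + \Delta V(\hat\theta_k) \le \alpha_2(s_k) - \alpha_3(s_k) \le \alpha_1(\rho\, s_k),
\]
and the conclusion $s_{k+1} \le \rho\, s_k$ by strict monotonicity of $\alpha_1$, is exactly the argument the paper gives. For the other three items the paper does not give a self-contained proof at all: it simply cites the standard discrete-time Lyapunov results in Agarwal (2000), Lakshmikantham (2002), and Bof (2018). The sublevel-set/connected-component argument you use for stability, and the LaSalle-type ``compact annulus with uniform decrease'' argument you use for (global) asymptotic stability, are precisely the standard proofs one finds in those references, so you are effectively filling in what the paper defers. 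One minor remark: in the exponential case you do not actually need to invoke the stability item first --- once $s_1 \le \rho s_0 < s_0$ the induction keeps the trajectory inside the original working neighborhood automatically --- but appealing to stability as you do is also fine and matches the localization bookkeeping you flag as the main obstacle.
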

See Appendix~\ref{sec:lyapunovthmproof} for the proof of Lemma~\ref{lemma:lyapunovtheorem}. Notice that $\Delta V$ was designed so that $\Delta V_k = V_{k+1} - V_k$, where $\Delta V_k \triangleq \Delta V(\hat{\theta}_k)$ and $V_k \triangleq V(\hat{\theta}_k)$. Furthermore, it is intended to represent a discrete-time equivalent to $\dot{V}(\hat{\theta}(t)) = \frac{\mathrm{d}}{\mathrm{d}t}V(\hat{\theta}(t))$. The negative semidefiniteness simply translates to non-strict monotonicity of $\{V_k\}_{k\in\mathbb{Z}_+}$, which can be interpreted as a surrogate to $\{\ell(\hat{\theta}_k)\}_{k\in\mathbb{Z}_+}$.

\section{Main Results: Convergence of EM}
We now provide conditions that establish the different notions of Lyapunov stability explored thus far, for the dynamical system representation of the EM algorithm, and make appropriate conclusions in terms of the convergence of EM. For the missing proofs, please refer to the appendix. To proceed, we first propose the natural candidate for a Lyapunov function in optimization, the function
\begin{equation}
    V(\hat{\theta}) \triangleq \ell(\hat{\theta}) - \ell(\hat{\theta}^\star) =  \log p(\hat{\theta}^\star|y) - \log p(\hat{\theta}|y),
    \label{eq:Vcandidate}
\end{equation}
where $\hat{\theta}^\star$ is a particular strict local maximum of interest (fixed for the remaining of this section), \emph{i.e.} a particular point in the parameter space (state space) for which we seek convergence of EM. Since employing the Lyapunov theorem requires $V$ to be continuous, we make a mild assumption on the continuity of the incomplete-data posterior and then establish (non-asymptotic) stability.
\begin{assumption}
$\theta\mapsto\log p(\theta|y)$ is continuous.
\label{ass:posteriorcontinuous}
\end{assumption}
\begin{proposition}
\label{prop:stability}
Under Assumptions~\ref{ass:Markov}--\ref{ass:posteriorcontinuous}, any strict local maximum $\hat{\theta}^\star$ of the incomplete-data log-posterior \mbox{$\theta\mapsto \log p(\theta|y)$} is a stable equilibrium of the dynamical system~\eqref{eq:dynamicalsystem} that represents~EM.
\end{proposition}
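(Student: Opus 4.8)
The plan is to verify the hypotheses of Lemma~\ref{lemma:lyapunovtheorem} (items 1 and 2) with the natural Lyapunov candidate $V$ given by~\eqref{eq:Vcandidate}, from which stability of $\hat{\theta}^\star$ follows immediately. Continuity of $V$ is guaranteed by Assumption~\ref{ass:posteriorcontinuous}. Positive definiteness of $V$ w.r.t.~$\hat{\theta}^\star$ (item 1) is exactly the statement that $\hat{\theta}^\star$ is a \emph{strict} local maximum of $\theta\mapsto\log p(\theta|y)$: near $\hat{\theta}^\star$ we have $\log p(\hat{\theta}|y) \le \log p(\hat{\theta}^\star|y)$ with equality only at $\hat{\theta}=\hat{\theta}^\star$, hence $V(\hat{\theta})\ge 0$ with equality iff $\hat{\theta}=\hat{\theta}^\star$, and $V(\hat{\theta}^\star)=0$. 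Finally, I must show $\Delta V = V\circ F - V$ is negative semidefinite w.r.t.~$\hat{\theta}^\star$, i.e. $\Delta V(\hat{\theta}^\star)=0$ and $\Delta V(\hat{\theta})\le 0$ for $\hat{\theta}$ near $\hat{\theta}^\star$.

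The second part of this last point is the monotonicity property of EM and is where the real content lies, though it is classical. Using the reformulation~\eqref{eq:EMnew}, $F(\hat{\theta}) = \argmax_\theta\{\log p(\theta|y) - d(\theta,\hat{\theta})\}$, so by optimality of $F(\hat{\theta})$ against the competitor $\theta=\hat{\theta}$,
\begin{equation}
\log p(F(\hat{\theta})|y) - d(F(\hat{\theta}),\hat{\theta}) \ \ge\ \log p(\hat{\theta}|y) - d(\hat{\theta},\hat{\theta}) = \log p(\hat{\theta}|y),
\end{equation}
since $d(\hat{\theta},\hat{\theta}) = \mathcal{D}_{\mathrm{KL}}(p_X(\cdot|y,\hat{\theta})\|p_X(\cdot|y,\hat{\theta})) = 0$. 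Because $d(F(\hat{\theta}),\hat{\theta})\ge 0$ (KL divergence is nonnegative), this yields $\log p(F(\hat{\theta})|y) \ge \log p(\hat{\theta}|y)$, i.e. $\ell(F(\hat{\theta}))\le \ell(\hat{\theta})$, which is precisely $\Delta V(\hat{\theta}) = \ell(F(\hat{\theta})) - \ell(\hat{\theta}) \le 0$. Note this argument needs no local restriction at all — EM is globally monotone — so $\Delta V \le 0$ everywhere on the state space, in particular near $\hat{\theta}^\star$.

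For $\Delta V(\hat{\theta}^\star) = 0$: by Proposition~\ref{prop:limitpoint} (or directly, since $\hat{\theta}^\star$ is a strict local max and hence a fixed point of $F$ — this requires a brief argument that a strict local maximizer of $\log p(\cdot|y)$ is fixed by EM, which follows from the displayed inequality: if $F(\hat{\theta}^\star)\ne\hat{\theta}^\star$ then $\log p(F(\hat{\theta}^\star)|y)\ge\log p(\hat{\theta}^\star|y)$ forces, by strictness near $\hat{\theta}^\star$ and a continuity/neighborhood argument, a contradiction, or one simply invokes that strict local maxima are isolated fixed points), we get $F(\hat{\theta}^\star)=\hat{\theta}^\star$, hence $\Delta V(\hat{\theta}^\star) = V(F(\hat{\theta}^\star)) - V(\hat{\theta}^\star) = V(\hat{\theta}^\star) - V(\hat{\theta}^\star) = 0$. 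With items 1 and 2 of the Lyapunov theorem verified, Lemma~\ref{lemma:lyapunovtheorem} gives that $\hat{\theta}^\star$ is stable, and Lemma~\ref{lemma:stableareequilibria} (together with Lemma~\ref{lemma:Fiscontinuous}) confirms it is an equilibrium.

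The main obstacle I anticipate is the slightly delicate point that $\hat{\theta}^\star$ being a strict \emph{local} maximum of the incomplete-data log-posterior need not, a priori, make it a fixed point of $F$ — one has to rule out EM jumping out of the neighborhood of $\hat{\theta}^\star$ to a point with higher posterior value. The clean way around this is to combine the global monotonicity $\Delta V\le 0$ with the local positive definiteness of $V$: if $\hat{\theta}^\star$ were not fixed, then starting from $\hat{\theta}^\star$, monotonicity gives $\ell(\hat{\theta}_k)\le\ell(\hat{\theta}^\star)$ for all $k$, but strictness near $\hat{\theta}^\star$ combined with continuity of $F$ (Lemma~\ref{lemma:Fiscontinuous}) forces the iterate to stay in a small neighborhood on which $\hat{\theta}^\star$ is the unique maximizer — a contradiction with $F(\hat{\theta}^\star)\ne\hat{\theta}^\star$ and $\ell(F(\hat{\theta}^\star))\le\ell(\hat{\theta}^\star)$ unless equality holds, which strictness forbids. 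This is essentially the content of Lemma~\ref{lemma:stableareequilibria} specialized here, so in the writeup I would lean on it rather than re-derive. Everything else is bookkeeping.
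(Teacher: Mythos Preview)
Your approach is essentially identical to the paper's: you use the same Lyapunov candidate~\eqref{eq:Vcandidate}, note that strict local maximality gives positive definiteness of $V$, and derive $\Delta V(\hat{\theta}) \le -d(F(\hat{\theta}),\hat{\theta}) \le 0$ from $Q(F(\hat{\theta}),\hat{\theta}) \ge Q(\hat{\theta},\hat{\theta})$ together with $d(\hat{\theta},\hat{\theta})=0$ and nonnegativity of the KL divergence, then invoke the Lyapunov theorem. The only difference is that you spend a paragraph worrying about whether $\hat{\theta}^\star$ is a fixed point (so that $\Delta V(\hat{\theta}^\star)=0$), whereas the paper simply shows $\Delta V \le 0$ and applies Lemma~\ref{lemma:lyapunovtheorem} without further comment; your caution there is reasonable but not something the paper engages with.
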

\begin{proof}
Consider the candidate Lyapunov function~\eqref{eq:Vcandidate}, \mbox{defined} for~$\hat{\theta}$ near~$\hat{\theta}^\star$. Clearly, $V$ is positive definite w.r.t. $\hat{\theta}^\star$. Furthermore, since $Q(F(\hat{\theta}),\hat{\theta}) = \max_\theta Q(\theta,\hat{\theta})$, then $Q(F(\hat{\theta}),\hat{\theta})\geq Q(\hat{\theta},\hat{\theta})$. Plugging, $Q(\theta,\hat{\theta}) = \log p(\theta|y) - d(\theta,\hat{\theta})$ and rearranging terms, we find that $\Delta V(\hat{\theta}) = \log p(\hat{\theta}|y) -\log p(F(\hat{\theta})|y) \leq -d(F(\hat{\theta}),\hat{\theta}) \leq 0$. The result follows by Lyapunov's theorem.
\end{proof}

Clearly, to establish asymptotic stability, it suffices that $d(F(\theta),\theta)>0$ for $\theta\neq\hat{\theta}^\star$. In order to achieve this, we make the following assumption.
\begin{assumption}
$d(\theta,\hat{\theta}) = 0$ if and only if $\theta = \hat{\theta}$, for~$\hat{\theta}$ near~$\hat{\theta}^\star$ and arbitrary $\theta$.
\label{ass:strongidentifiability}
\end{assumption}
 This is also a relatively mild condition when~$\hat{\theta}^\star$ is a ``good'' local minimizer of $\ell(\theta)$. It follows, for instance, from a strong form of identifiability of the parameterized posterior latent distribution. It suffices that, for $\hat{\theta}_1\neq \hat{\theta}_2$ near~$\hat{\theta}^\star$, the conditional densities $p_X(\cdot|y,\hat{\theta}_1)$ and $p_X(\cdot|y,\hat{\theta}_2)$ differ with non-zero probability.
Failure to have Assumption~\ref{ass:strongidentifiability} be satisfied near a particular strict local maximizer~$\hat{\theta}^\star$ could result in that point not being being a fixed point of EM (\emph{i.e.} equilibrium of EM's dynamical system representation), let alone an asymptotically stable point and thus not locally convergent or even a limit point. 

Unfortunately, it is a well-documented behavior of the EM algorithm that its convergence properties and overall performance heavily rely on whether the initialization occurred near a ``good'' local maximizer of the incomplete-data log-likelihood or log-posterior~\citep{Figueiredo2004}.
In the context of GMM clustering, for instance, Assumption~\ref{ass:strongidentifiability} simply means that, after having sampled the GMM at hand, then the different posterior class probabilities will strictly depend on any perturbation to the parameters in the model (namely, the prior class probabilities and their corresponding means and covariance matrices).

\subsection{Local and Global Convergence of EM}
We now formally state and prove the local convergence of EM as a consequence of asymptotic Lyapunov stability. The proof is the straightforward culmination of previous section's discussion.

\begin{theorem}[Local Convergence]
Let $\hat{\theta}^\star$ a strict local maximizer of the incomplete-data posterior $\theta\mapsto p(\theta|y)$ and an isolated fixed point of EM. If Assumptions~\ref{ass:Markov}--\ref{ass:strongidentifiability} hold, then $\hat{\theta}^\star$ is an asymptotically stable equilibrium of the dynamical system~\eqref{eq:dynamicalsystem} that represents the EM algorithm~\eqref{eq:EMalg}, and thus EM is locally convergent to~$\hat{\theta}^\star$.
\label{thm:localconvergence}
\end{theorem}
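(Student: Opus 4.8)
The plan is to assemble the pieces already in place and verify the hypotheses of Lyapunov's theorem (Lemma~\ref{lemma:lyapunovtheorem}, parts 1 and 3) for the candidate function $V$ from~\eqref{eq:Vcandidate}. First I would recall from Proposition~\ref{prop:stability} that, under Assumptions~\ref{ass:Markov}--\ref{ass:posteriorcontinuous}, $V$ is continuous (by Assumption~\ref{ass:posteriorcontinuous}) and positive definite with respect to $\hat{\theta}^\star$ on a neighborhood of $\hat{\theta}^\star$ (because $\hat{\theta}^\star$ is a strict local maximizer of $\theta\mapsto\log p(\theta|y)$), so assertion~1 of the Lyapunov theorem holds. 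The computation in that same proof already gives $\Delta V(\hat{\theta}) = \log p(\hat{\theta}|y) - \log p(F(\hat{\theta})|y) \leq -\,d(F(\hat{\theta}),\hat{\theta}) \leq 0$, so $\Delta V$ is negative semidefinite; the only thing left is to upgrade this to negative definiteness (assertion~3), i.e.\ to show $\Delta V(\hat{\theta}) < 0$ for $\hat{\theta} \neq \hat{\theta}^\star$ near $\hat{\theta}^\star$.

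Next I would argue the strict inequality. From $\Delta V(\hat{\theta}) \le -d(F(\hat{\theta}),\hat{\theta})$ it suffices to show $d(F(\hat{\theta}),\hat{\theta}) > 0$ whenever $\hat{\theta}\neq\hat{\theta}^\star$ is near $\hat{\theta}^\star$. By Assumption~\ref{ass:strongidentifiability}, $d(\theta,\hat{\theta}) = 0$ iff $\theta = \hat{\theta}$ for $\hat{\theta}$ near $\hat{\theta}^\star$, so $d(F(\hat{\theta}),\hat{\theta}) = 0$ forces $F(\hat{\theta}) = \hat{\theta}$, i.e.\ $\hat{\theta}$ is a fixed point of EM. But $\hat{\theta}^\star$ is assumed to be an \emph{isolated} fixed point of EM, so there is a neighborhood of $\hat{\theta}^\star$ containing no other fixed point; shrinking the neighborhood on which we work to this one, we conclude $F(\hat{\theta}) \neq \hat{\theta}$, hence $d(F(\hat{\theta}),\hat{\theta}) > 0$, hence $\Delta V(\hat{\theta}) < 0$, for every $\hat{\theta}\neq\hat{\theta}^\star$ in that neighborhood. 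Thus assertion~3 holds and Lyapunov's theorem yields that $\hat{\theta}^\star$ is asymptotically stable; the remark preceding the theorem (asymptotically stable local maxima are locally convergent points of EM) then gives local convergence of the iterates $\hat{\theta}_k\to\hat{\theta}^\star$.

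The main obstacle — really the only nontrivial point — is the interplay between Assumption~\ref{ass:strongidentifiability} and the isolated-fixed-point hypothesis: one must be careful to note that $d$ is a premetric that is only known to vanish exactly on the diagonal \emph{for $\hat{\theta}$ near $\hat{\theta}^\star$}, and $F(\hat{\theta})$ is what gets plugged into the first slot, so continuity of $F$ (Lemma~\ref{lemma:Fiscontinuous}) is implicitly needed to ensure that for $\hat{\theta}$ close enough to $\hat{\theta}^\star$ both arguments $F(\hat{\theta})$ and $\hat{\theta}$ lie in the region where Assumption~\ref{ass:strongidentifiability} applies (note $F(\hat{\theta}^\star)=\hat{\theta}^\star$ by Proposition~\ref{prop:limitpoint} combined with strict local maximality, or directly since $\hat{\theta}^\star$ is a fixed point). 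Everything else is bookkeeping: pick the neighborhood small enough that (i) $V$ is positive definite there, (ii) Assumption~\ref{ass:strongidentifiability} holds there, (iii) it contains no fixed point of EM other than $\hat{\theta}^\star$, and (iv) $F$ maps it into the domain of Assumption~\ref{ass:strongidentifiability}; then apply Lemma~\ref{lemma:lyapunovtheorem}.
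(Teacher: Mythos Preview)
Your proof is correct and follows exactly the paper's own argument: reuse the Lyapunov function and inequality chain from Proposition~\ref{prop:stability}, then invoke Assumption~\ref{ass:strongidentifiability} together with the isolated-fixed-point hypothesis to upgrade $-d(F(\hat{\theta}),\hat{\theta})\leq 0$ to a strict inequality, giving negative definiteness of $\Delta V$. One minor simplification: your ``main obstacle'' about needing $F(\hat{\theta})$ to stay near $\hat{\theta}^\star$ is unnecessary, since Assumption~\ref{ass:strongidentifiability} explicitly allows the first argument $\theta$ to be arbitrary and only constrains the second argument $\hat{\theta}$ to be near $\hat{\theta}^\star$.
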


\begin{proof}
We can reuse the argument in the proof of Proposition~\ref{prop:stability}, where the last inequality is strict for $\theta$ near $\hat{\theta}^\star$, due to $\hat{\theta}^\star$ being an isolated fixed point of EM and Assumption~\ref{ass:strongidentifiability}. Thus, $\Delta V$ is now negative definite  w.r.t.~$\hat{\theta}^\star$ instead of only semidefinite.
\end{proof}


In practice, since fixed points of EM must be stationary points of the log-posterior~\citep{Figueiredo2004}, then a sufficient condition for this assumption to hold would be the continuous differentiability of $\ell(\hat{\theta})$ for $\hat{\theta}$ near~$\hat{\theta}^\star$, and that $\hat{\theta}^\star$ were an isolated stationary point. We now state the conditions that lead to global convergence of EM.
\begin{theorem}
Suppose that 
$\{\theta: p(\theta|y) > 0\} = \mathbb{R}^p$
and that the conditions of Theorem~\ref{thm:localconvergence} hold, with Assumption~\ref{ass:strongidentifiability} holding globally (\emph{i.e.} $d(\theta,\hat{\theta}) = 0$ if and only if $\theta=\hat{\theta}$ for every $\theta,\hat{\theta}\in\mathbb{R}^p$). If $\ell(\hat{\theta}) \triangleq -\log p(\theta|y)$ is radially unbounded and~$\hat{\theta}^\star$ is the only fixed point of EM, then~$\hat{\theta}^\star$ is a  globally asymptotically stable equilibrium of the dynamical system~\eqref{eq:dynamicalsystem} that represents the EM algorithm~\eqref{eq:EMalg}, and thus EM is globally convergent to~$\hat{\theta}^\star$.
\label{thm:globalconvergence}
\end{theorem}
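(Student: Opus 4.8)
The plan is to mirror the proof of Theorem~\ref{thm:localconvergence}, but now verify assertion~4 of the Lyapunov theorem (Lemma~\ref{lemma:lyapunovtheorem}) rather than assertion~3, so that global asymptotic stability follows. First I would take the same candidate Lyapunov function $V(\hat{\theta}) = \ell(\hat{\theta}) - \ell(\hat{\theta}^\star)$ from~\eqref{eq:Vcandidate}, now defined on all of $\mathbb{R}^p$, which is legitimate precisely because the hypothesis $\{\theta : p(\theta|y) > 0\} = \mathbb{R}^p$ guarantees $\ell(\theta) = -\log p(\theta|y)$ is finite everywhere, and Assumption~\ref{ass:posteriorcontinuous} makes it continuous. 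Then I would check the four requirements of assertion~4 in turn: (i) $V$ is globally positive definite w.r.t.~$\hat{\theta}^\star$ — here the key point is that, since $\hat{\theta}^\star$ is the \emph{only} fixed point of EM and fixed points of EM coincide with equilibria (hence, by the earlier discussion, with stationary points of $\ell$), combined with radial unboundedness of $\ell$, the strict local maximum $\hat{\theta}^\star$ must in fact be the unique global minimizer of $\ell$, so $V(\hat{\theta}) > 0$ for all $\hat{\theta} \neq \hat{\theta}^\star$; (ii) $V$ is radially unbounded — this is immediate from the assumed radial unboundedness of $\ell$; (iii) $-\Delta V$ is globally positive definite — reusing the computation from Proposition~\ref{prop:stability}, $\Delta V(\hat{\theta}) \leq -d(F(\hat{\theta}),\hat{\theta}) \leq 0$ globally, and by the global version of Assumption~\ref{ass:strongidentifiability} this is strict unless $F(\hat{\theta}) = \hat{\theta}$, i.e. unless $\hat{\theta}$ is a fixed point of EM, which by hypothesis forces $\hat{\theta} = \hat{\theta}^\star$; (iv) $V$ continuous — already noted. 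With all of assertion~4 verified, Lemma~\ref{lemma:lyapunovtheorem} yields that $\hat{\theta}^\star$ is globally asymptotically stable, hence $\hat{\theta}_k \to \hat{\theta}^\star$ for every $\hat{\theta}_0 \in \mathbb{R}^p$, which is exactly global convergence of EM.

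The main obstacle I anticipate is step (i): justifying rigorously that the unique fixed point $\hat{\theta}^\star$ is actually the \emph{global} minimizer of $\ell$, not merely a strict local one. The argument I have in mind is that, by radial unboundedness and continuity, $\ell$ attains a global minimum somewhere; any global minimizer is in particular a stationary point of $\ell$ (assuming the differentiability already invoked after Theorem~\ref{thm:localconvergence}), hence a fixed point of EM, hence equal to $\hat{\theta}^\star$ by uniqueness. One must be slightly careful that "fixed point of EM $\Leftrightarrow$ stationary point of $\ell$" is being used in both directions and that it holds globally, not just near $\hat{\theta}^\star$; the excerpt invokes this relationship citing~\citep{Figueiredo2004}, so I would lean on that. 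A secondary subtlety is whether $V$ being positive definite \emph{and} $-\Delta V$ being positive definite globally, plus radial unboundedness of $V$, is exactly what assertion~4 of Lemma~\ref{lemma:lyapunovtheorem} demands — it is, essentially verbatim — so no extra work is needed there.

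If one prefers to avoid leaning on the global stationary-point characterization, an alternative is a LaSalle-type argument: $\{V_k\}$ is non-increasing and bounded below by $0$, hence converges; radial unboundedness of $\ell$ confines the trajectory to a compact sublevel set, so limit points exist; by Proposition~\ref{prop:limitpoint} every limit point is a fixed point of EM, hence equals $\hat{\theta}^\star$; therefore $\hat{\theta}_k \to \hat{\theta}^\star$. This route sidesteps identifying $\hat{\theta}^\star$ as the global minimizer but still needs the uniqueness of the fixed point and the continuity established in Lemma~\ref{lemma:Fiscontinuous} to run the limit-point argument. I would present the Lyapunov-function version as the primary proof for consistency with the paper's stated methodology, and it should compress to just a few lines given that the bulk of the work — the bound $\Delta V(\hat{\theta}) \leq -d(F(\hat{\theta}),\hat{\theta})$ — is already in hand from Proposition~\ref{prop:stability}.
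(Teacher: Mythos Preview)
Your proposal is correct and follows essentially the same route as the paper: reuse the bound $\Delta V(\hat{\theta}) \leq -d(F(\hat{\theta}),\hat{\theta})$ from Proposition~\ref{prop:stability} and verify condition~4 of Lemma~\ref{lemma:lyapunovtheorem} (global positive definiteness of $V$ and $-\Delta V$, plus radial unboundedness of $V$) to conclude global asymptotic stability. If anything, you are more careful than the paper, which simply asserts that ``condition~4 \ldots\ clearly holds with the additional assumptions'' without spelling out the argument for global positive definiteness of $V$ that you flag in step~(i).
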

\begin{proof}
We can the argument in the proofs of Proposition~\ref{prop:stability} and Theorem~\ref{thm:localconvergence} to establish asymptotic stability. Furthermore, condition~4 of the Lyapunov theorem (radial unboundedness) clearly holds with the additional assumptions.
\end{proof}

In particular, the unicity of fixed points follows, for the case of continuously differentiable incomplete-data log-posterior, for unimodal distributions. Furthermore, we require the support of the posterior to be the entire Euclidean space of appropriate dimension, and to vanish radially, (\emph{i.e.} $p(\theta|y) \to 0$ as $\|\theta\|\to\infty$). These last two conditions are largely technical and can be roughly circumvented in practice (for \mbox{absolutely} continuous posterior distributions). On the other hand, the unimodality rarely occurs and, in fact, EM can easily converge to saddle points or diverge.

\subsection{Linear and Quadratic Convergence of EM}

Returning to the assumptions that lead to exponential stability, and thus local convergence of EM, we now explore alternatives that can lead to linear and quadratic convergence of EM. We achieve this by further strengthening the strong identifiability Assumption~\ref{ass:strongidentifiability}. In addition, we want condition~5 of the Lyapunov theorem to be satisfied for the candidate Lyapunov function~\eqref{eq:Vcandidate}.

\begin{assumption}
\label{ass:quadraticconv}
There exist some class-$\mathcal{K}$ functions $\alpha_1,\alpha_2,\alpha_3$ such that $\alpha_2(s) \leq \alpha_1(\rho\, s) + \alpha_3(s)$ holds near $s=0$, for some $\rho\in (0,1)$, and for every $\theta,\hat{\theta}$ with $\hat{\theta}^\star$ near $\hat{\theta}^\star$
\begin{subequations}
\label{eq:quadraticconv2}
\begin{align}
    \mathrm{e}^{-\alpha_2(\|\hat{\theta}-\hat{\theta}^\star\|)} \leq \frac{p(\hat{\theta}|y)}{p(\hat{\theta}^\star|y)} &\leq \mathrm{e}^{-\alpha_1(\|\hat{\theta}-\hat{\theta}^\star\|)},\\
    d(\theta,\hat{\theta}) &\geq \alpha_3(\|\hat{\theta}-\hat{\theta}^\star\|). 
\end{align}
\end{subequations}
\end{assumption}
This assumption sets the stage to state conditions for the linear convergence (in the sense of optimization) of the EM algorithm iterates.
\begin{theorem}[Linear convergence of iterates]
\label{thm:linConvIte}
Under the conditions of Theorem~\ref{thm:localconvergence} and Assumption~\ref{ass:quadraticconv}, $\hat{\theta}^\star$ is an exponentially stable equilibrium (with rate $\rho$) of the dynamical system~\eqref{eq:dynamicalsystem} that represents the EM algorithm~\eqref{eq:EMalg}, and thus $\|\hat{\theta}_k - \hat{\theta}^\star\| \leq \|\hat{\theta}_0 - \hat{\theta}^\star\|\cdot \rho^k = \mathcal{O}(\rho^k)$.
\end{theorem}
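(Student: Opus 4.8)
The plan is to verify that the candidate Lyapunov function $V(\hat\theta) = \ell(\hat\theta) - \ell(\hat\theta^\star) = \log p(\hat\theta^\star|y) - \log p(\hat\theta|y)$ satisfies condition~5 of the Lyapunov theorem (Lemma~\ref{lemma:lyapunovtheorem}), so that exponential stability of $\hat\theta^\star$ — and hence the stated Q-linear rate on the iterates — follows immediately. The conclusion $\|\hat\theta_k - \hat\theta^\star\| \le c\,\rho^k$ with $c = \|\hat\theta_0 - \hat\theta^\star\|$ is then a restatement of the exponential-stability definition in Section~\ref{subsec:lyapunovstability}, with the constant being traced through the argument.

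First I would take logarithms in the two-sided bound~\eqref{eq:quadraticconv2}(a) of Assumption~\ref{ass:quadraticconv}. Since $V(\hat\theta) = -\log\bigl(p(\hat\theta|y)/p(\hat\theta^\star|y)\bigr)$, the bound $\mathrm{e}^{-\alpha_2(\|\hat\theta-\hat\theta^\star\|)} \le p(\hat\theta|y)/p(\hat\theta^\star|y) \le \mathrm{e}^{-\alpha_1(\|\hat\theta-\hat\theta^\star\|)}$ becomes exactly $\alpha_1(\|\hat\theta-\hat\theta^\star\|) \le V(\hat\theta) \le \alpha_2(\|\hat\theta-\hat\theta^\star\|)$ for $\hat\theta$ near $\hat\theta^\star$, which is precisely~\eqref{eq:DeltaVexp}(a). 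So the sandwich on $V$ comes for free.

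Next I would handle the decrease condition~\eqref{eq:DeltaVexp}(b). Reusing the computation from the proof of Proposition~\ref{prop:stability}, we have $\Delta V(\hat\theta) = \log p(\hat\theta|y) - \log p(F(\hat\theta)|y) \le -d(F(\hat\theta),\hat\theta)$. Applying~\eqref{eq:quadraticconv2}(b) with $\theta = F(\hat\theta)$ — which is legitimate since the bound is claimed for arbitrary $\theta$ paired with $\hat\theta$ near $\hat\theta^\star$ — gives $d(F(\hat\theta),\hat\theta) \ge \alpha_3(\|\hat\theta-\hat\theta^\star\|)$, hence $\Delta V(\hat\theta) \le -\alpha_3(\|\hat\theta-\hat\theta^\star\|)$, which is~\eqref{eq:DeltaVexp}(b). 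Together with the already-assumed compatibility relation $\alpha_2(s) \le \alpha_1(\rho s) + \alpha_3(s)$ near $s=0$, all the hypotheses of condition~5 are in place, so Lemma~\ref{lemma:lyapunovtheorem} yields exponential stability with rate $\rho$; unwinding the definition gives $\|\hat\theta_k - \hat\theta^\star\| \le \|\hat\theta_0 - \hat\theta^\star\|\,\rho^k$ once $\hat\theta_0$ is close enough, and since the theorem only claims a local $\mathcal{O}(\rho^k)$ statement the constant $c = \|\hat\theta_0-\hat\theta^\star\|$ is immediate.

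The only genuinely delicate point — and the one I would be most careful about — is the bookkeeping of neighborhoods: $F(\hat\theta)$ must itself remain in the region where~\eqref{eq:quadraticconv2} is valid when we instantiate $\theta = F(\hat\theta)$, and the region on which $V$ is positive definite and continuous (using Assumption~\ref{ass:posteriorcontinuous}) must be chosen compatibly. This is handled exactly as in Theorem~\ref{thm:localconvergence}: continuity of $F$ (Lemma~\ref{lemma:Fiscontinuous}) together with $F(\hat\theta^\star)=\hat\theta^\star$ lets us shrink the initial neighborhood so that the trajectory stays inside the valid region, and the monotone decrease of $V_k$ keeps it there thereafter. Everything else is a direct substitution into the hypotheses of Lemma~\ref{lemma:lyapunovtheorem}, so the proof is short.
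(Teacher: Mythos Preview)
Your proposal is correct and follows essentially the same approach as the paper: use the Lyapunov function $V(\hat\theta)=\log p(\hat\theta^\star|y)-\log p(\hat\theta|y)$, translate~\eqref{eq:quadraticconv2} into~\eqref{eq:DeltaVexp} (the sandwich on $V$ by taking logs, and the decrease bound via $\Delta V\le -d(F(\hat\theta),\hat\theta)\le -\alpha_3(\|\hat\theta-\hat\theta^\star\|)$ from Proposition~\ref{prop:stability} and~\eqref{eq:quadraticconv2}(b)), and then invoke condition~5 of Lemma~\ref{lemma:lyapunovtheorem}. The paper's proof is a two-line sketch of exactly this; you have simply written out the substitutions and added the neighborhood bookkeeping explicitly.
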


\begin{proof}
We can reuse the argument the argument of Proposition~\ref{prop:stability} to establish asymptotic stability. Furthermore, from~\eqref{eq:quadraticconv2}, we obtain~\eqref{eq:DeltaVexp}. Therefore, condition~5 of Lemma~\ref{lemma:lyapunovtheorem} is satisfied, and thus exponential stability is certified.
\end{proof}

Similarly to what was noted by~\cite{Taylor2018}, if $\ell(\hat{\theta})$ is Lipschitz continuous, then we clearly have $\ell(\hat{\theta}_k) - \ell(\hat{\theta}^\star) \leq L\|\hat{\theta}_0 - \hat{\theta}^\star\|\cdot \rho^k = \mathcal{O}(\rho^k)$. In that sense, $p(\hat{\theta}_k|y) \to p(\hat{\theta}^\star|y)$ actually converges \emph{quadratically}, in the sense of optimization. Alternatively, it would have sufficed that $(\alpha_3\circ \alpha_2^{-1})(s)\geq (1-\mu)\cdot s$ for some $\mu\in (0,1)$ to achieve $\ell(\hat{\theta}_k) - \ell(\hat{\theta}^\star) = \mathcal{O}(\mu^k)$. However, as discussed in Section~\ref{subsec:lyapunovstability}, we can relax the condition in Lyapunov's theorem required for exponential stability into something that leads to a notion of stability stronger than asymptotic stability but weaker than exponential stability, and which allows us to directly establish the Q-linear convergence of $\{V_k\}_{k\in\mathbb{Z}_+}$.

\begin{theorem}[Quadratic convergence of posterior]
\label{thm:quadposterior}
Under the conditions of Theorem~\ref{thm:localconvergence}, if there exists some $\mu\in (0,1)$ such that
\begin{equation}
    p(\hat{\theta}|y) \geq p(\hat{\theta}^\star|y)\cdot\mathrm{e}^{-\frac{d(F(\hat{\theta}),\hat{\theta})}{1-\mu}},
    \label{eq:quadposterior}
\end{equation}
for every $\hat{\theta}$ near $\hat{\theta}^\star$, then  $\lim\limits_{k\to\infty} \log p(\hat{\theta}_k|y) = \log p(\hat{\theta}^\star|y)$ with a Q-linear convergence rate upper bounded by~$\mu$. In particular, we have $\log p(\hat{\theta}^\star|y) - \log p(\hat{\theta}_k|y) \leq \log\left(\frac{p(\hat{\theta}^\star|y)}{p(\hat{\theta}_0|y)}\right)\mu^k = \mathcal{O}(\mu^k)$.
\end{theorem}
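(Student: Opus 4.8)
The plan is to show that hypothesis~\eqref{eq:quadposterior} forces the candidate Lyapunov function $V$ from~\eqref{eq:Vcandidate} to contract by the factor $\mu$ at every step, and then to iterate this contraction along the trajectory, reading off the stated geometric decay of $\log p(\hat{\theta}_k|y)$ directly from $V_k$. This is essentially the exponential-stability argument of Theorem~\ref{thm:linConvIte}, but with the relaxed Lyapunov condition discussed at the end of Section~\ref{subsec:lyapunovstability}: instead of sandwiching $V$ between class-$\mathcal{K}$ functions of $\|\hat{\theta}-\hat{\theta}^\star\|$, we impose a geometric decay on $V$ itself, which is all that is needed for Q-linear convergence of $\{V_k\}$.

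First I would recall from the proof of Proposition~\ref{prop:stability} that, with $V(\hat{\theta}) = \log p(\hat{\theta}^\star|y) - \log p(\hat{\theta}|y)$, one has the pointwise bound $\Delta V(\hat{\theta}) = \log p(\hat{\theta}|y) - \log p(F(\hat{\theta})|y) \leq -d(F(\hat{\theta}),\hat{\theta})$ for $\hat{\theta}$ near $\hat{\theta}^\star$, and that $V$ is positive definite w.r.t.~$\hat{\theta}^\star$ there. Then, taking logarithms in~\eqref{eq:quadposterior} and rearranging yields exactly $d(F(\hat{\theta}),\hat{\theta}) \geq (1-\mu)\,V(\hat{\theta})$ for $\hat{\theta}$ near $\hat{\theta}^\star$. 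Combining the two inequalities gives $\Delta V(\hat{\theta}) \leq -(1-\mu)\,V(\hat{\theta})$, i.e. $V(F(\hat{\theta})) \leq \mu\,V(\hat{\theta})$, the desired one-step contraction.

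Then I would propagate this along the trajectory $\{\hat{\theta}_k\}_{k\in\mathbb{Z}_+}$. Under the conditions of Theorem~\ref{thm:localconvergence}, $\hat{\theta}^\star$ is (asymptotically) stable, so for any neighborhood on which the standing assumptions, the bound $\Delta V\leq -d(F(\cdot),\cdot)$, the positive-definiteness of $V$, and~\eqref{eq:quadposterior} all hold, we may pick $\hat{\theta}_0$ close enough to $\hat{\theta}^\star$ that the entire trajectory stays inside it. On that trajectory the contraction applies at every step, so $V_{k+1} \leq \mu\,V_k$ and hence $V_k \leq \mu^k V_0$ for all $k$; since $V_k = \log p(\hat{\theta}^\star|y) - \log p(\hat{\theta}_k|y) \geq 0$ near the strict local maximum, this is precisely $\log p(\hat{\theta}^\star|y) - \log p(\hat{\theta}_k|y) \leq \log\!\big(p(\hat{\theta}^\star|y)/p(\hat{\theta}_0|y)\big)\,\mu^k = \mathcal{O}(\mu^k)$, and in particular $\log p(\hat{\theta}_k|y)\to\log p(\hat{\theta}^\star|y)$. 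The Q-linear rate bound follows from $V_{k+1}/V_k \leq \mu$ whenever $V_k>0$; if instead $V_k=0$ for some $k$, positive-definiteness forces $\hat{\theta}_k=\hat{\theta}^\star$ and the iterates are stationary thereafter, so the rate statement holds trivially.

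I expect the only genuine subtlety to be this localization step — guaranteeing that the iterates never leave the region where~\eqref{eq:quadposterior}, the positive-definiteness of $V$, and $\Delta V\leq -d(F(\cdot),\cdot)$ are simultaneously valid — which is handled cleanly by invoking the (asymptotic) stability already established in Theorem~\ref{thm:localconvergence}; everything else is the short algebraic manipulation above. It may also be worth remarking explicitly that, unlike Theorem~\ref{thm:linConvIte}, this result does not claim linear convergence of the iterates $\hat{\theta}_k$ themselves, only of the objective gap $V_k$, which is why the weaker hypothesis~\eqref{eq:quadposterior} suffices.
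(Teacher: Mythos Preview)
Your proposal is correct and follows essentially the same approach as the paper: both invoke the asymptotic stability from Theorem~\ref{thm:localconvergence}, combine the bound $\Delta V(\hat{\theta})\leq -d(F(\hat{\theta}),\hat{\theta})$ from Proposition~\ref{prop:stability} with the logarithm of~\eqref{eq:quadposterior} to obtain $V(F(\hat{\theta}))\leq \mu\,V(\hat{\theta})$, and then iterate. Your treatment is in fact more careful than the paper's, since you make explicit the localization step (using stability to keep the trajectory inside the neighborhood where the hypotheses hold) and the degenerate case $V_k=0$, both of which the paper leaves implicit.
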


\begin{proof}
Once again reusing the argument of Proposition~\ref{prop:stability}, we have asymptotic stability, Furthermore, plugging $-d(F(\hat{\theta}),\hat{\theta})\geq \Delta V(\hat{\theta})$ into~\eqref{eq:quadposterior} and rearranging terms, we find that $V(F(\hat{\theta}))\leq \mu V(\hat{\theta})$. Thus, $V(\hat{\theta}_{k+1})\leq \mu\cdot V(\hat{\theta}_k)$, and therefore $V(\hat{\theta}_k) \leq V(\hat{\theta}_0)\cdot\mu^k = \mathcal{O}(\mu^k)$.
\end{proof}

Notice that~\eqref{eq:quadposterior} can be restated as~$d(F(\hat{\theta}),\hat{\theta}) \geq (1-\mu)\log\left(\frac{p(\hat{\theta}^\star|y)}{p(\hat{\theta}|y)}\right),$
for $\hat{\theta}$ near $\hat{\theta}^\star$, to more closely resemble~\eqref{eq:quadraticconv2}. Naturally, the main disadvantage of the last result is that checking the inequality~\eqref{eq:quadposterior} is likely to be virtually impossible in practice, given that it is directly based on the~EM dynamics $F$. Furthermore, it has been widely observed that EM's convergent rate is often sublinear, so the conditions in the previous results likely only hold in a few ``good'' local maximizers of the incomplete-data log-posterior.
The last result focus on linear convergence of the posterior, by using $V(\hat{\theta}) \triangleq p(\hat{\theta}^\star|y) - p(\hat{\theta}|y)$ as the new candidate Lyapunov function.
\begin{theorem}[Linear convergence of the posterior]
\label{thm:linConvPos}
Under the conditions of Theorem~\ref{thm:localconvergence}, if the concavity-like condition
\begin{equation}
    \label{eq:linConvPos}
    p(F(\hat{\theta})|y) \geq \mu\, p(\hat{\theta}) + (1-\mu)p(\hat{\theta}^\star|y),
\end{equation}
holds for every $\hat{\theta}$ near~$\hat{\theta}^\star$, then $p(\hat{\theta}^\star|y) - p(\hat{\theta}_k|y) \leq (p(\hat{\theta}^\star|y) - p(\hat{\theta}_0|y))\cdot \mu^k = \mathcal{O}(\mu^k)$.
\end{theorem}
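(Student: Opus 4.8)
The plan is to run the same Lyapunov argument used in the preceding theorems, but now with the new candidate Lyapunov function $V(\hat{\theta}) \triangleq p(\hat{\theta}^\star|y) - p(\hat{\theta}|y)$, defined for $\hat{\theta}$ near $\hat{\theta}^\star$ (reading the hypothesis~\eqref{eq:linConvPos} with $p(\hat{\theta}|y)$ in place of $p(\hat{\theta})$). First I would observe that, under the conditions of Theorem~\ref{thm:localconvergence}, $\hat{\theta}^\star$ is a strict local maximizer of $\theta\mapsto p(\theta|y)$, so $V$ is continuous and positive definite w.r.t.~$\hat{\theta}^\star$ on a small enough neighborhood, exactly the property required of condition~1 of Lemma~\ref{lemma:lyapunovtheorem}.

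The key step is a one-line manipulation of~\eqref{eq:linConvPos}: subtracting both sides from $p(\hat{\theta}^\star|y)$ gives
\[
p(\hat{\theta}^\star|y) - p(F(\hat{\theta})|y) \;\leq\; \mu\bigl(p(\hat{\theta}^\star|y) - p(\hat{\theta}|y)\bigr),
\]
that is, $V(F(\hat{\theta})) \leq \mu\, V(\hat{\theta})$ for $\hat{\theta}$ near $\hat{\theta}^\star$. Since $\mu\in(0,1)$ and $V$ is positive definite, $\Delta V = V\circ F - V \leq (\mu-1)\,V$ is negative definite w.r.t.~$\hat{\theta}^\star$, so conditions~1 and~3 of Lemma~\ref{lemma:lyapunovtheorem} hold and $\hat{\theta}^\star$ is asymptotically stable. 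In particular there is a $\delta$-ball around $\hat{\theta}^\star$ such that every trajectory started inside it remains in the neighborhood where~\eqref{eq:linConvPos} is valid; for such an initialization the contraction $V(\hat{\theta}_{k+1}) \leq \mu\, V(\hat{\theta}_k)$ applies at every step, and unrolling the recursion yields $V(\hat{\theta}_k) \leq \mu^k\, V(\hat{\theta}_0)$, which is precisely the claimed estimate $p(\hat{\theta}^\star|y) - p(\hat{\theta}_k|y) \leq \mu^k\bigl(p(\hat{\theta}^\star|y) - p(\hat{\theta}_0|y)\bigr) = \mathcal{O}(\mu^k)$ (indeed a Q-linear rate).

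I do not expect a real obstacle: the estimate is an immediate algebraic consequence of the hypothesis, in direct parallel with the proof of Theorem~\ref{thm:quadposterior}. The only point that needs a moment's care is the neighborhood bookkeeping — the contraction is assumed only locally, so one must invoke the asymptotic stability obtained from the Lyapunov theorem to guarantee that the entire trajectory $\{\hat{\theta}_k\}_{k\in\mathbb{Z}_+}$ stays within the region where~\eqref{eq:linConvPos} holds before unrolling the recursion (alternatively, one simply adds the standing assumption, implicit throughout this section, that the iterates remain near $\hat{\theta}^\star$).
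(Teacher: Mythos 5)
Your proof is correct and follows essentially the same route as the paper: both take $V(\hat{\theta}) = p(\hat{\theta}^\star|y) - p(\hat{\theta}|y)$ as the Lyapunov function and rearrange~\eqref{eq:linConvPos} into the contraction $V(F(\hat{\theta})) \leq \mu\, V(\hat{\theta})$, which unrolls to the claimed bound. Your added care about the neighborhood bookkeeping (and your reading of $p(\hat{\theta})$ as the evident typo for $p(\hat{\theta}|y)$) only makes the argument more complete than the paper's two-line version.
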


\begin{proof}
This time, we consider the continuous and positive definite candidate Lyapunov function $V(\hat{\theta}) \triangleq p(\hat{\theta}^\star|y) - p(\hat{\theta}|y)$. By rearranging terms in~\eqref{eq:linConvPos}, we can show that $V(F(\hat{\theta}))\leq \mu\cdot V(\hat{\theta})$.
\end{proof}

\subsection{Experimental validation of the convergence properties}
We demonstrate the convergence properties of the MAP-EM algorithm on a general GMM with independent Gaussian priors on the unknown means. The details can be found in Appendix~\ref{sec:experiments}, where we can see that, as the prior becomes more informative, convergence is achieved at faster rate.  We also note that our convergence results readily apply to the MAP-EM algorithm over many distributions other than GMMs, and thus the wide range of applications it has been applied to. 

\begin{figure}[!ht]
    \centering
    \includegraphics[width = 0.4\linewidth]{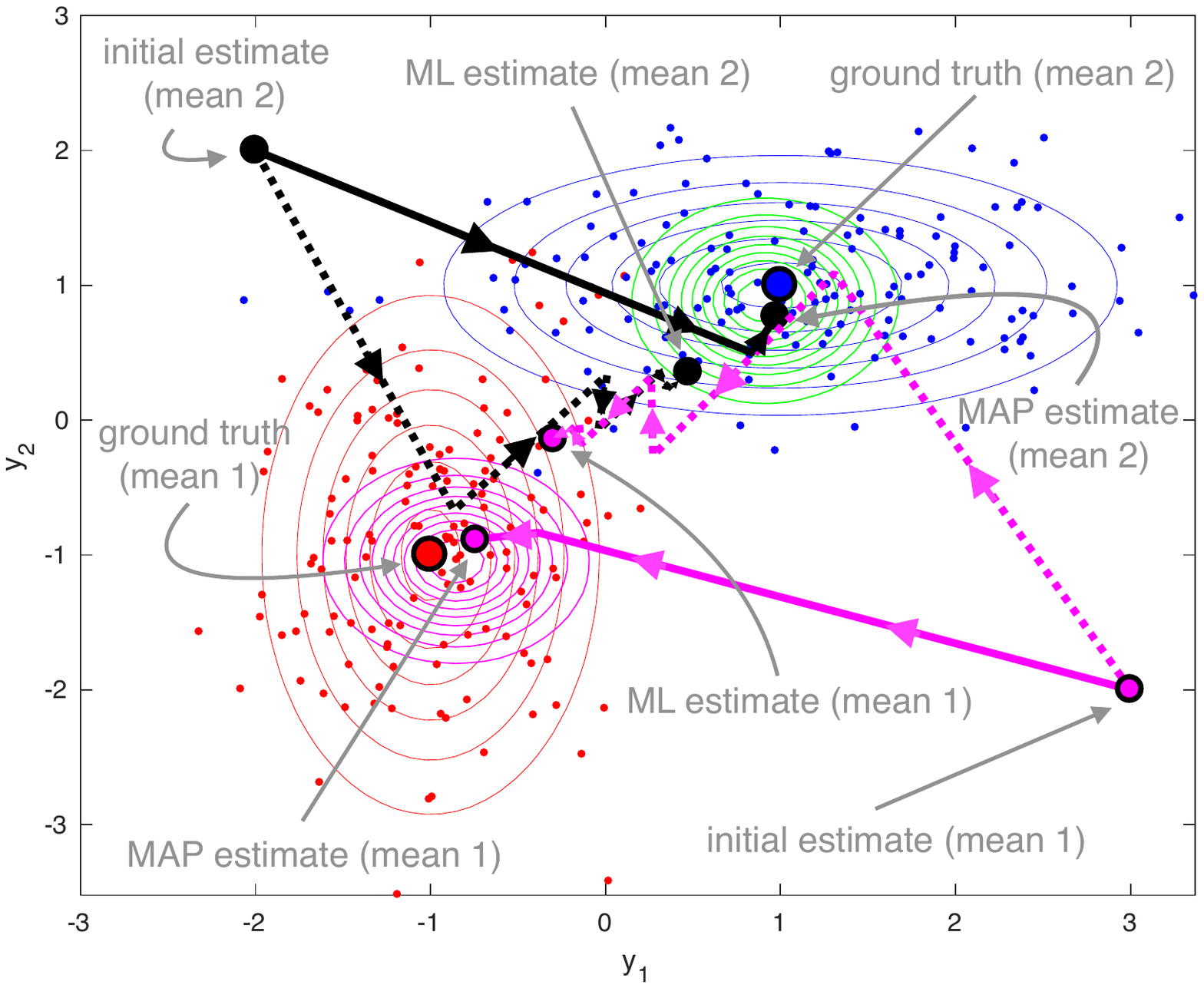}  ~~
    \includegraphics[width = 0.55\linewidth]{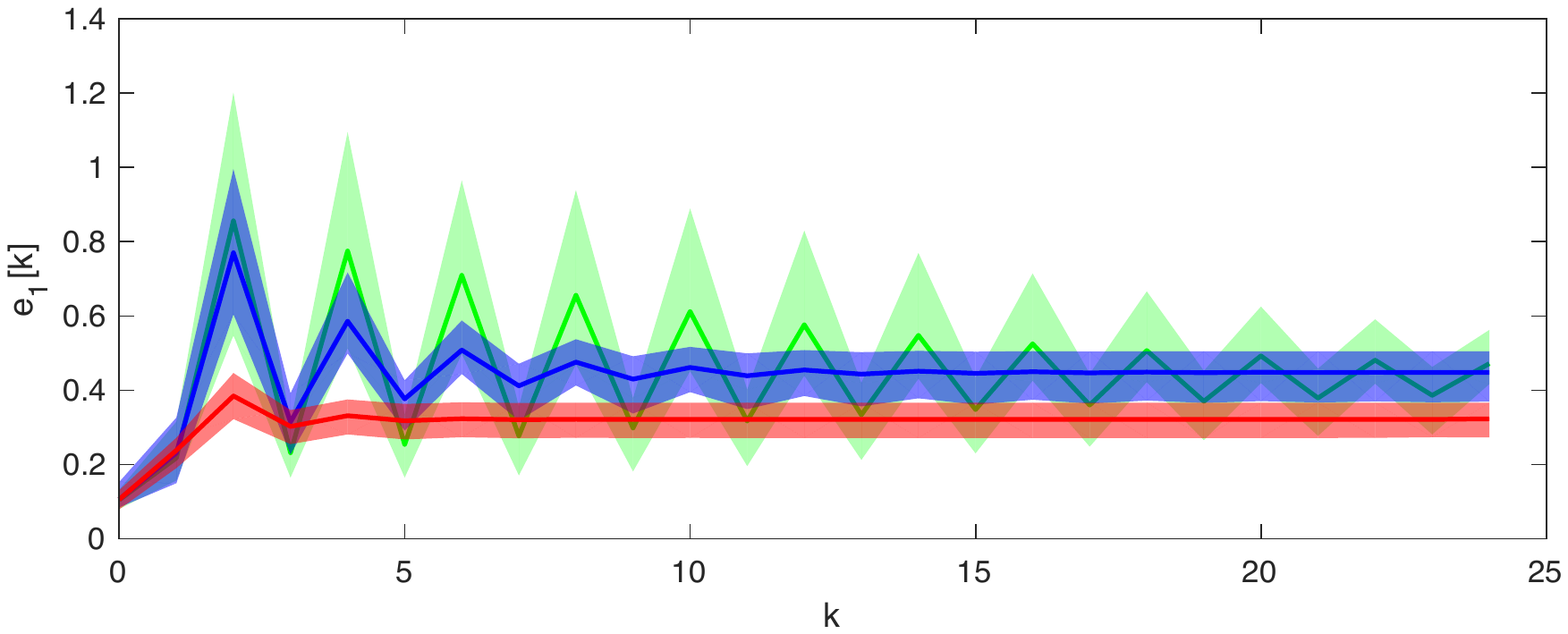}
    \caption{\small ({\bf left}) GMM with Gaussian priors on the means. The components 1 and 2 of the GMM are depicted, respectively, by the colors red and blue, and their corresponding EM estimate are depicted, respectively, by the colors magenta and green. The dashed line represents EM with a flat prior, whereas the full line with the Gaussian prior with contour lines depicted by the corresponding color. 
    ({\bf right}) Rate $\mu_1[k] \triangleq \frac{\|\hat{\theta}_1[k+1] - (\hat{\theta}_\mathrm{MAP})_1\|}{\|\hat{\theta}_1[k] - (\hat{\theta}_\mathrm{MAP})_1\|}$ for progressively more informative priors: $\sigma_{m,0} = 0.15,\,0.1,\,0.05$ depicted, respectively, by the colors green, blue, and red.  The linear convergence rates roughly appear to be $\mu_1=0.43,\,0.41,\,0.31$, respectively.}
    \label{fig:GMM_convrate}
\end{figure}

\section{Conclusion and Next Steps}
In this paper, we addressed a gap in analyzing and designing optimization algorithms from the point of view of dynamical systems and control theory. Indeed, m of the recent recent literature largely focus on \emph{continuous-time} representations (via ordinary differential equations or inclusions) of general-purpose \emph{gradient-based} nonlinear optimization algorithms. However, we provide a unifying framework for the study of iterative optimization algorithms as \emph{discrete-time} dynamical systems, and describe several relationships between forms of Lyapunov stability of state-space dynamical systems and convergence of optimization algorithms. In particular, we explored how exponential stability can be used to derive linear (or superlinear) convergence rates. 
We then narrowed this framework in detail to analyze convergence of the expectation-maximization (EM) algorithm for maximum a posteriori (MAP) estimation. Following first principles from dynamical systems stability theory, conditions for convergence of MAP-EM were developed, including conditions to show \emph{fast} convergence (linear or quadratic), though EM often converges sublinearly. 

The conditions we derived have a convenient statistical and information-theoretic interpretation and may thus be used in the future to design other novel EM-like algorithms with provable convergence rate. The conditions we derive would have been difficult to unveil without our approach, and thus we argue that a treatment similar to ours can we adopted for the convergence analysis of many other algorithms in machine learning. For future work, we believe that our approach can prove valuable in the design of EM-like algorithms for online estimation subject to a concept drift or data poisoning. In fact, by carefully \emph{designing} a controller (input) on an otherwise unstable system, we can leverage a process known as \emph{stabilization} to force different notions stability of dynamical systems that represent an optimization algorithm, and thus improve its convergence rate or robustness.


\newpage
\pagebreak

\bibliography{references}

\newpage
\pagebreak


\appendix
\section{Convergence via Subexponential Stability}
\label{appendix:Subexponential}
As is often remarked when addressing in the context of EM, monotonicity is generally not enough to attain convergence. Nevertheless, the negative definiteness condition ensures asymptotic stability, and thus (local) convergence. Indeed, from an optimization perspective, the Lyapunov conditions for exponential stability are simply equivalent to \emph{strict} monotonicity of $\{V_k\}_{k\in\mathbb{Z}_+}$, together with an absolute attainable lower bound of the surrogate $V(\hat{\theta})$ of $\ell(\hat{\theta})$ at $\hat{\theta} = \hat{\theta}^\star$.

Note that from~\eqref{eq:DeltaVexp} it follows that
\begin{equation}
    \Delta V(\hat{\theta}) \leq -(\alpha_3\circ\alpha_2^{-1})(V(\hat{\theta}))
    \label{eq:linearconvV}
\end{equation}
for every~$\hat{\theta}$ near $\hat{\theta}^\star$, which can be restated as
\begin{equation}
    V_{k+1} \leq (\mathrm{id}-\alpha_3\circ\alpha_2^{-1})(V_k),
    \label{eq:linearconvV2}
\end{equation}
for $\hat{\theta}_0$ near $\hat{\theta}^\star$, where $\mathrm{id}(s)\triangleq s$. Therefore, $V_k \leq (\mathrm{id}-\alpha_3\circ\alpha_2^{-1})^k(V_0)$, where $\alpha^0 \triangleq \mathrm{id}$ and $\alpha^{k+1} \triangleq \alpha \circ \alpha^k$.  In particular, if $\alpha_i (s) = a_i \, s^p$ with \mbox{$a_i > 0$} ($i=1,2,3$) and $p>0$, then~\eqref{eq:linearconvV} and~\eqref{eq:linearconvV2} become
\begin{equation}
    \label{eq:DeltaV1}
    \Delta V_k \leq -(1-\mu)V_k 
\end{equation}
and
\begin{equation}
    \label{eq:DeltaV2}
    V_{k+1} \leq \mu \, V_k,
\end{equation}
respectively, where $\mu \triangleq 1 - a_3/a_2 \in [0,1)$. In that case, we have $V_k\to 0$ as $k\to\infty$, with a \mbox{Q-linear} convergence rate \mbox{upper} bounded by~$\mu$. In particular, we have \mbox{$V_k \leq V_0 \cdot \mu^k = \mathcal{O}(\mu^k)$}. We further note that $\alpha_1$ and $\rho$ do not directly influence the bound $\mu$. This observation is similar in spirit to Lemma~1 in~\cite{Aitken1994}.

With these remarks into consideration, we note that if $V$ is continuous, positive definite w.r.t.~$\hat{\theta}^\star$, and ~\eqref{eq:DeltaV1} (equivalently,~\eqref{eq:DeltaV2}) holds for $\hat{\theta}$ near $\hat{\theta}^\star$, then the linear rate $V_k = \mathcal{O}(\mu^k)$ still holds, without necessarily having linear convergence of $\{\hat{\theta}_k\}_{k\in\mathbb{Z}_+}$. Therefore,~\eqref{eq:linearconvV} without necessarily~\eqref{eq:DeltaVexp} induces a weaker notion than exponential stability, but stronger than asymptotic stability. In fact, it coincides with the notion of \mbox{$\ell_p$-stability} for $\alpha_i(s) = a_i\,s^{p^i}$, with $p=p_3/p_2\geq 1$~\cite{Lakshmikantham2002}. 

From an optimization perspective, we can leverage the ideas discussed in the previous paragraphs by noting that we are often concerned about the convergence rate in terms of the objective function or some meaningful surrogate of it, rather than directly the iterates in a numerical optimization scheme. This approach was implicit, for instance, in~\cite{Taylor2018,Vaquero2019}.

\section{Relationships between stability and convergence}
\label{appendix:stability}
We represent the relationships between all the aforementioned concepts (local variants only, for the sake of simplicity) through the following diagram:

\begin{center}
{
\begin{tikzcd}[arrows=Rightarrow]
& \textnormal{exponentially stable} \arrow[d]\arrow[r]       & \textnormal{linearly convergent} \arrow[d]\\
& \textnormal{asymptotically stable} \arrow[d]\arrow[r]        & \textnormal{locally convergent}  \arrow[d]\\
& \textnormal{stable point}            \arrow[d]                 & \textnormal{limit point}        \arrow[d]\\
& \textnormal{equilibrium}           \arrow[r, Leftrightarrow] & \textnormal{fixed point.}
\end{tikzcd}
}
\end{center}

\section{Proof of Proposition~\ref{prop:infoEM}}
From Assumption~\ref{ass:Markov}, we have
\begin{equation}
    p(y|x) = p(y|x,\theta) = \frac{p(y|\theta)}{p(x|\theta)}p(x|y,\theta) = \frac{1}{p(\theta|x)}\frac{p(y)}{p(x)}p(\theta|y)p(x|y,\theta),
\end{equation}
and therefore
\begin{equation}
    p(\theta|x) = \frac{p(y)/p(x)}{p(y|x)}p(\theta|y)p(x|y,\theta) = \frac{p(\theta|y)p(x|y,\theta)}{p(x|y)}.
\end{equation}

Let $f_1(\theta)\sim f_2(\theta)$ indicate that $\theta\mapsto f_1(\theta)- f_2(\theta)$ is constant. Then, we have
\begin{equation}
    \log p(\theta |x) \sim \log p(\theta|y) + \log p(x|y,\theta) = \log p(\theta|y)  - \log\left(\frac{p(x|y,\hat{\theta})}{p(x|y,\theta)}\right)
\end{equation}
for any fixed~$\hat{\theta}$. Taking the expected value in $x\sim p(x|y,\hat{\theta})$ and attending to the definitions of EM~\eqref{eq:EMalg} and KL divergence (Subsection~\ref{subsec:info_theory}), then the expression~\eqref{eq:Qtemp} follows. Finally,~\eqref{eq:EMnew} readily follows by combining~\eqref{eq:EMalg} and~\eqref{eq:Qtemp} and disregarding any additive terms that do not depend on~$\theta$, since those will not affect the so-called \mbox{\emph{M-step}} (maximization step) of the EM algorithm, \emph{i.e.} the maximization in~\eqref{eq:EMalg}. $\hfill\blacksquare$

\section{Proof of Lemma~\ref{lemma:Fiscontinuous}}
\label{sec:lemma2proof}
Let $\{\hat{\theta}_k\}$ a convergent sequence, not necessarily generated by the EM algorithm (\emph{i.e.} without necessarily having $\hat{\theta}_{k+1} = F(\hat{\theta}_k)$), with limit $\lim\limits_{k\to\infty}\hat{\theta}_k = \hat{\theta}$. Notice that
\begin{subequations}
\begin{align}
    Q\left(\lim_{k\to\infty} F(\hat{\theta}_k),\hat{\theta}\right) &= Q\left(\lim_{k\to\infty}F(\hat{\theta}_k),\lim_{k\to\infty} \hat{\theta}_k\right)\\
    &= \lim_{k\to\infty} Q(F(\hat{\theta}_k),\hat{\theta}_k) \label{eq:cont1} \\
    &\geq \lim_{k\to\infty} Q(F(\hat{\theta}),\hat{\theta}_k) \label{eq:ineq1} \\
    &= Q\left(F(\hat{\theta}),\lim_{k\to\infty}\hat{\theta}_k\right) \label{eq:cont2} \\
    &= Q(F(\hat{\theta}),\hat{\theta})\\
    &\geq Q\left(\lim_{k\to\infty} F(\hat{\theta}_k),\hat{\theta}\right), \label{eq:ineq2} 
\end{align}
\end{subequations}
where~\eqref{eq:cont1} and~\eqref{eq:cont2} both follow from the continuity of the \mbox{$Q$-function} (Assumption~\ref{ass:Qcontinuous}). On the other hand, the inequalities~\eqref{eq:ineq1} and~\eqref{eq:ineq2} follow by noting that $Q(F(\hat{\theta}_k),\hat{\theta}_k) \geq F(\theta,\hat{\theta}_k)$ and $Q(F(\hat{\theta}),\hat{\theta}) \geq Q(\theta,\hat{\theta})$ for every $\theta$. In particular, they follow by choosing $\theta = F(\hat{\theta})$ and $\theta = \lim\limits_{k\to\infty} F(\hat{\theta}_k)$, respectively, and taking the limit $k\to\infty$ on both sides. 

Therefore, we have
\begin{equation}
    Q\left(\lim\limits_{k\to\infty} F(\hat{\theta}_k),\hat{\theta}\right) = Q(F(\hat{\theta}),\hat{\theta}) =  \max\limits_\theta Q(\theta,\hat{\theta}),
\end{equation}
and thus $\lim\limits_{k\to\infty} F(\hat{\theta}_k) \in \argmax\limits_\theta Q(\theta,\hat{\theta}) = \{F(\hat{\theta})\}$, which in turn makes $F$ continuous. $\hfill\blacksquare$

\section{Proof of Proposition~\ref{prop:limitpoint}}
Let $\hat{\theta}_0$ be such that $\lim\limits_{k\to\infty}\hat{\theta}_k = \hat{\theta}^\star$, where $\{\hat{\theta}_k\}_{k\in\mathbb{Z}_+}$ was generated~\eqref{eq:EMalg}. Then,
\begin{equation}
    F(\hat{\theta}^\star) = F\left(\lim_{k\to\infty}\hat{\theta}_{k}\right) = \lim_{k\to\infty} F(\hat{\theta}_{k}) = \lim_{k\to\infty} \hat{\theta}_{k+1} = \hat{\theta}^\star,
\end{equation}
where the second equality follows from the continuity of~$F$. $\hfill\blacksquare$

\section{Proof of Lemma~\ref{lemma:stableareequilibria}}
Let $\varepsilon^{(n)} > 0$ such that $\varepsilon^{(n)}\to 0$ as $n\to\infty$ and let $\delta^{(n)} > 0$ be such that, if $\|\hat{\theta}_0 - \hat{\theta}^\star\| \leq \delta^{(n)}$, then $\|\hat{\theta}_k - \hat{\theta}^\star\| \leq \varepsilon^{(n)}$ for every $k\in\mathbb{Z}$. Naturally, $0 < \delta^{(n)}\leq \varepsilon^{(n)}$, and thus $\delta^{(n)}\to 0$ as $n\to\infty$. Therefore, given any sequence $\{\hat{\theta}_0^{(n)}\}_{n\in\mathbb{Z}_{+}}$ such that $\|\hat{\theta}_0^{(n)} - \hat{\theta}^\star\| \leq \delta^{(n)}$, then $\hat{\theta}_0^{(n)} \to \hat{\theta}^\star$ as $n\to\infty$. Furthermore, $\|F(\hat{\theta}_0^{(n)}) - \hat{\theta}^\star\| = \|\hat{\theta}_1^{(n)} - \hat{\theta}^\star\| \leq \varepsilon^{(n)}$, and thus $F(\hat{\theta}_0^{(n)})\to\hat{\theta}^\star$ as $n\to\infty$. From the continuity of $F$, it follows that $F(\hat{\theta}^\star) = F\left(\lim\limits_{n\to\infty}\hat{\theta}_0^{(n)}\right) = \lim\limits_{n\to\infty} F(\hat{\theta}_0^{(n)}) = \hat{\theta}^\star$. $\hfill\blacksquare$

\section{Proof of Lemma~\ref{lemma:lyapunovtheorem} (Lyapunov Theorem)}
\label{sec:lyapunovthmproof}
For the non-asymptotic and asymptotic stability part of the Lyapunov theorem, please refer to the proofs of Theorems~5.9.1--5.9.2 in~\cite{Agarwal2000}, Corollary~4.8.1 and Theorem~4.8.3 in~\cite{Lakshmikantham2002}, or Theorem~1.2 in~\cite{Bof2018}. For the global asymptotic stability, please refer to Theorem~5.9.8 in~\cite{Agarwal2000}, Theorem~4.9.1. in~\cite{Lakshmikantham2002}, or Theorem~1.4 in~\cite{Bof2018}.

For the exponential stability, we can adapt the ideas in~\cite{Aitken1994} and the proof of Theorem~5.4 in~\cite{Bof2018}. More precisely, we have
\begin{equation}
    \alpha_1(\|\hat{\theta}_{k+1}-\hat{\theta}^\star\|) \leq V_{k+1} = \Delta V_k + V_k \leq \alpha_2(\|\hat{\theta}_k-\hat{\theta}^\star\|) - \alpha_3(\|\hat{\theta}_k-\hat{\theta}^\star\|) \leq \alpha_1(\rho\|\hat{\theta}_k-\hat{\theta}^\star\|),
\end{equation}
and thus $\|\hat{\theta}_{k+1}-\hat{\theta}^\star\| \leq \rho\|\hat{\theta}_k-\hat{\theta}^\star\|$. In other words, $\hat{\theta}_k\to\hat{\theta}^\star$ as $k\to\infty$ Q-linearly, with a rate upper bounded by $\rho$. $\hfill\blacksquare$

\section{An Illustrative Example \& Experiments}
\label{sec:experiments}

Let $y=\{y^{(1)},\ldots,y^{(n)}\}$ be some dataset that we wish to cluster. Let us assume that $y^{(i)}\in\mathbb{R}^d$ is randomly, but independently, sampled from an unknown class $z^{(i)}\in\{1,\ldots,M\}$, which we thus seek to infer. For the sake of simplicity, consider that the class probabilities $\mathbb{P}(z^{(i)}=m) = \alpha_m\in (0,1)$ such that $\alpha_1 + \ldots + \alpha_M = 1$ are known. Furthermore, suppose that each class $m$ is associated with a Gaussian distribution $\mathcal{N}(\theta_m,\Sigma_m)$ with known $d\times d$ covariance matrix~$\Sigma_m\succ 0$, but unknown mean~$\theta_m\in\mathbb{R}^d$. Therefore, our data follows the Gaussian mixture model (GMM),
\begin{equation}
    p(y^{(i)}|\theta) = \sum_{m=1}^M \alpha_m \phi_m(y^{(i)}|\theta_m),
\end{equation}
with $\theta = \{\theta_1,\ldots,\theta_M\}$, where $\phi_m(\cdot|\theta_m)$ denotes the PDF of~$\mathcal{N}(\theta_m,\Sigma_m)$. 

In order to cluster our data, we seek to infer~$z^{(i)}$ for each datum~$y^{(i)}$. To achieve this, it is customary to first estimate~$\theta$ as some~$\hat{\theta}$, and then proceed by maximizing the class-conditional posterior:
\begin{subequations}
\begin{align}
    \mathrm{cluster}(y^{(i)}) &\triangleq \argmax_{m\in\{1,\ldots,M\}} \mathbb{P}(z^{(i)}=m|y^{(i)},\hat{\theta}) = \argmin_{m\in\{1,\ldots,M\}} \|y^{(i)} - \hat{\theta}_m\|_{\Sigma_m^{-1}},
\end{align}
\end{subequations}
where $\|y^{(i)} - \hat{\theta}_m\|_{\Sigma_m^{-1}} \triangleq \sqrt{(y^{(i)} - \hat{\theta}_m)^\top\Sigma_m^{-1}(y^{(i)} - \hat{\theta}_m)}$. In other words, the clusters are assigned by projecting each datum onto the class with smallest Mahalanobis distance.

We seek the MAP estimator or some otherwise ``good'' local maximizer~\cite{Figueiredo2004} of the log-posterior,
\begin{align}
\theta \mapsto\log p(\theta|y)
\sim \log p(\theta) +\sum_{i=1}^n\log\left(\sum_{m=1}^M\alpha_m\phi_m(y^{(i)}|\theta_m)\right),
\end{align}
where~$p(\theta) = p(\theta_1,\ldots,\theta_M)$ is some given prior. Unfortunately, stationary points cannot be analytically computed in general. Nevertheless, the EM algorithm can be derived for certain priors. Indeed, ignoring terms that do not depend on $\theta$ by fixing~$\hat{\theta}[k]=\{\hat{\theta}_1[k],\ldots,\hat{\theta}_m[k]\}$, we have
\begin{small}
\begin{align}
    & Q(\theta,\hat{\theta}[k]) \sim \log p(\theta_1,\ldots,\theta_M) -\frac{1}{2}\sum_{i=1}^n\sum_{m=1}^M\hat{\alpha}_m^{(i)}[k]\|y^{(i)} -  \theta_m\|_{\Sigma_m^{-1}}^2, \\
     & \hat{\alpha}_m^{(i)}[k] \triangleq \mathbb{P}(z^{(i)}=m|y^{(i)},\hat{\theta}[k])
    =  \frac{\alpha_m\phi_m(y^{(i)}|\hat{\theta}_m[k])}{\sum\limits_{m'=1}^M \alpha_{m'}\phi_{m'}(y^{(i)}|\hat{\theta}_{m'}[k])}. \nonumber
\end{align}
\end{small}
%
%
Notice that the prior may be seen as a regularization term for the M-step of the EM algorithm. However, the tractability of the M-step, and thus the EM algorithm itself, will strongly depend on the choice for the prior\footnote{For further discussion of MAP estimation and choice of priors for GMMs and other finite mixture models (FMM), as well as applications of FMMs in unsupervised learning and an alternative to EM, see~\cite{Figueiredo2002}.}. In particular, if we adopt independent priors $p(\theta_1,\ldots_M) = \prod_{m=1}^M p(\theta_m)$ from Gaussian or Laplacian distributions, then the M-step is mathematically tractable. Indeed, with these choices of priors, the M-step can be cast, respectively, as a Ridge a LASSO regression problem. Nevertheless, notice that the prior will change the MAP estimate, and thus a poor choice may negatively bias the estimation procedure.
In order to derive explicit expressions, we now focus on the case of independent Gaussian priors $\theta_m \overset{\mathrm{indep}}{\sim}~\mathcal{N}(\theta_{m,0},\Sigma_{m,0})$ with tunable parameters $\theta_{m,0}$~and $\Sigma_{m,0}$ ($m=1,\ldots,M$). From the previous discussion, we thus have
\begin{small}
\begin{align}
    \hat{\theta}_m[k+1] &=  \left(\Sigma_m\Sigma_{m,0}^{-1} + \sum_{i=1}^n \hat{\alpha}_m^{(i)}[k]\,\mathbb{I}_{d\times d}\right)^{-1} \cdot\left(\Sigma_m\Sigma_{m,0}^{-1}\theta_{m,0} + \sum_{i=1}^n\hat{\alpha}_m^{(i)}[k]y^{(i)}\right)
\end{align}
\end{small}
for $m\in\{1,\ldots,M\}$ and $k\in\mathbb{Z}_+=\{0,1,2,\ldots\}$. Naturally, the MAP-EM reduces to the standard EM for maximum likelihood (ML) estimation when the prior becomes flat (\emph{e.g.}, fixed arbitrary $\theta_{m,0}\in\mathbb{R}^d$ and $\Sigma_{m,0}^2 = \sigma_0^2\mathbb{I}_{d\times d}$ with $\sigma_0^2\to\infty$). Unlike the non-Bayesian case, the (known) covariance matrices~$\Sigma_1,\ldots,\Sigma_M$ will indeed influence the MAP-EM algorithm.

We then independently simulate $n=300$ points from a \mbox{$(d=2)$-dimensional} GMM with $M=2$ components and class probabilities $\alpha_1 = \alpha_2 = 1/2$. The true means will be placed at $\theta_1 = [-1,-1]^\top$ and $\theta_2 = [1,1]^\top$, and covariance matrices will be $\Sigma_1 = \mathrm{diag}(0.25,1)$ and $\Sigma_2 = \mathrm{diag}(1,0.25)$. We will use independent  Gaussian priors with covariance matrices $\Sigma_{1,0} = \Sigma_{1,0} = \Sigma_{2,0} = \mathrm{diag}(\sigma_0^2,\sigma_0^2)$ and means $\theta_{m,0}$ sampled from $\mathcal{N}(\theta_m,\Sigma_{m,0})$ for $m=1,2$. We will perform $T=20$ trials for each choice of $\sigma_0^2>0$. Finally, we place the initial estimates at $\hat{\theta}_1 = [3,-2]^\top$ and $\hat{\theta}_2 = [-2,2]^\top$.

The results are illustrated in Figures~\ref{fig:GMM_convrate} (left) and (right). 
We see that choosing a flat prior leads EM to converge to the ML estimate, which in this case is significantly farther from the true placement of the unknown means, compared to a non-flat prior with a highly informative prior. However, this is partly only true since, in our setup, $\sigma_0\to 0$ will lead the prior to become $p(\theta) = \delta(\theta-\theta_\mathrm{true})$. Furthermore, we demonstrate that, as the prior becomes more informative, convergence is achieved at  faster rate. The convergence rate appears to decrease and possibly approach superlinearity (recall that $\hat{\theta}[k]\to\hat{\theta}^\star$ superlinearly if $\lim_{k\to\infty} \|e[k+1]\|/\|e[k]\| = 0$, where $e[k] = \hat{\theta}[k] - \hat{\theta}^\star$). However, due to numerical stability issues, it is difficult to estimate the


\end{document}